\newcommand{\eg}{\textit{e.g.}}
\newcommand{\ie}{\textit{i.e.}}
\newtheorem{theorem}{Theorem}
\newtheorem{myDef}{Definition}
\journal{Neural Networks}
\begin{document}
\begin{frontmatter}

\title{Harnessing Collective Structure Knowledge in Data Augmentation for Graph Neural Networks}

\author[1]{Rongrong Ma}
\ead{Rongrong.ma-1@student.uts.edu.au}
\author[2]{Guansong Pang\corref{cor1}}
\ead{gspang@smu.edu.sg}
\author[3]{Ling Chen}
\ead{ling.chen@uts.edu.au}
\cortext[cor1]{Corresponding author}
\affiliation[1]{organization={Faculty of Engineering and Information Technology, University of Technology Sydney},
             addressline={123 Broadway},
             city={Sydney},
             postcode={2007},
             state={NSW},
             country={Australia}}

\affiliation[2]{organization={School of Computing and Information Systems, Singapore Management University},
             addressline={80 Stamford Rd},
             postcode={178902},
             country={Singapore}}
             
\affiliation[3]{organization={Faculty of Engineering and Information Technology, University of Technology Sydney},
             addressline={123 Broadway},
             city={Sydney},
             postcode={2007},
             state={NSW},
             country={Australia}}

\begin{abstract}
Graph neural networks (GNNs) have achieved state-of-the-art performance in graph representation learning.
Message passing neural networks, which learn representations through recursively aggregating information from each node and its neighbors, are among the most commonly-used GNNs. However, a wealth of structural information of individual nodes and full graphs is often ignored in such process, which restricts the expressive power of GNNs. Various graph data augmentation methods that enable the message passing with richer structure knowledge have been introduced as one main way to tackle this issue, but they are often focused on individual structure features and difficult to scale up with more structure features. In this work we propose a novel approach, namely \underline{co}llective \underline{s}tructure knowledge-augmented \underline{g}raph \underline{n}eural \underline{n}etwork (CoS-GNN), in which a new message passing method is introduced to allow GNNs to harness a diverse set of node- and graph-level structure features, together with original node features/attributes, in augmented graphs. In doing so, our approach largely improves the structural knowledge modeling of GNNs in both node and graph levels, resulting in substantially improved graph representations.
This is justified by extensive empirical results where CoS-GNN outperforms state-of-the-art models in various graph-level learning tasks, including graph classification, anomaly detection, and out-of-distribution generalization. Code is available at: \url{https://github.com/RongrongMa/CoS-GNN}.
\end{abstract}

\begin{keyword}
Graph representation learning \sep Graph neural networks \sep Data augmentation
\end{keyword}
\end{frontmatter}

\section{Introduction}
Graph representation learning is one of the most popular topic in graph mining because of its numerous potential applications in bioinformatics~\cite{borgwardt2005protein,borgwardt2005shortest,zhao2022deep,song2022graph,wu2023survey}, chemical~\cite{hassani2022cross,li2022geomgcl,zhang2022graph}, social networks~\cite{zhang2022improving,Stankovic2020data,yanardag2015deep,shen2023uniskgrep} and cyber security~\cite{he2022illuminati}. In the past few years, Graph Neural Networks (GNNs)~\cite{wu2020comprehensive,chikwendu2023comprehensive} have been emerging as one of the most powerful and successful techniques for graph representation learning. 

Message passing neural networks constitute a prevalent category of GNN models, which learn node features and graph structure information through recursively aggregating current representations of node and its neighbors. Diverse aggregation strategies have been introduced, giving rise to various GNN backbones, such as GCN, GIN, and among others~\cite{wu2020comprehensive,kipfsemi,hamilton2017inductive,velivckovicgraph,xupowerful}. However, the expressive power of these message passing GNNs is upper bounded by 1-dimensional Weisfeiler-Leman (1-WL) tests~\cite{xupowerful,morris2019weisfeiler} that encode a node’s color via recursively expanding the neighbors of the node to construct a rooted subtree for the node.
As shown in Figure \ref{fig:WLfail}, such rooted subtrees are with limited expressiveness and might be the same for graphs with different structures, leading to failure in distinguishing these graphs. This presents a bottleneck for applying WL tests or message passing neural networks to many real-world graph application domains.

The failure of WL test is mainly due to the rooted subtree's limited capabilities in capturing different substructures that can appear in the graph.
Since the message passing scheme of GNNs mimics the 1-WL algorithm, one intuition to enhance the expressive power of GNNs is to enrich the passing information, especially structural knowledge, to help GNNs model diverse substructures. One popular approach to achieve this is data augmentation (DA) techniques~\cite{ding2022data}. 
One general framework in this line is to compute additional node features based on structural properties and attach them to original node features, such as DE~\cite{li2020distance}, GSN~\cite{bouritsas2022improving}, fast ID-GNN~\cite{you2021identity} and LAGNN~\cite{liu2022local}. 
Except extending node features, NestedGNN~\cite{zhang2021nested} and ID-GNN~\cite{you2021identity} compute and add node embeddings based on the local subgraph of each node. However, these methods only focus on local structure while many important global structure features are ignored. Also, GSN and fast ID-GNN often rely on a properly pre-defined substructure set to incorporate domain-specific inductive biases~\cite{zhang2021nested}. Further, these DA techniques are focused on augmenting the graph with some individual features, which are difficult to scale up to the incorporation of a diverse, large set of augmented features.

\begin{figure}
    \centering
    \includegraphics[width=8cm]{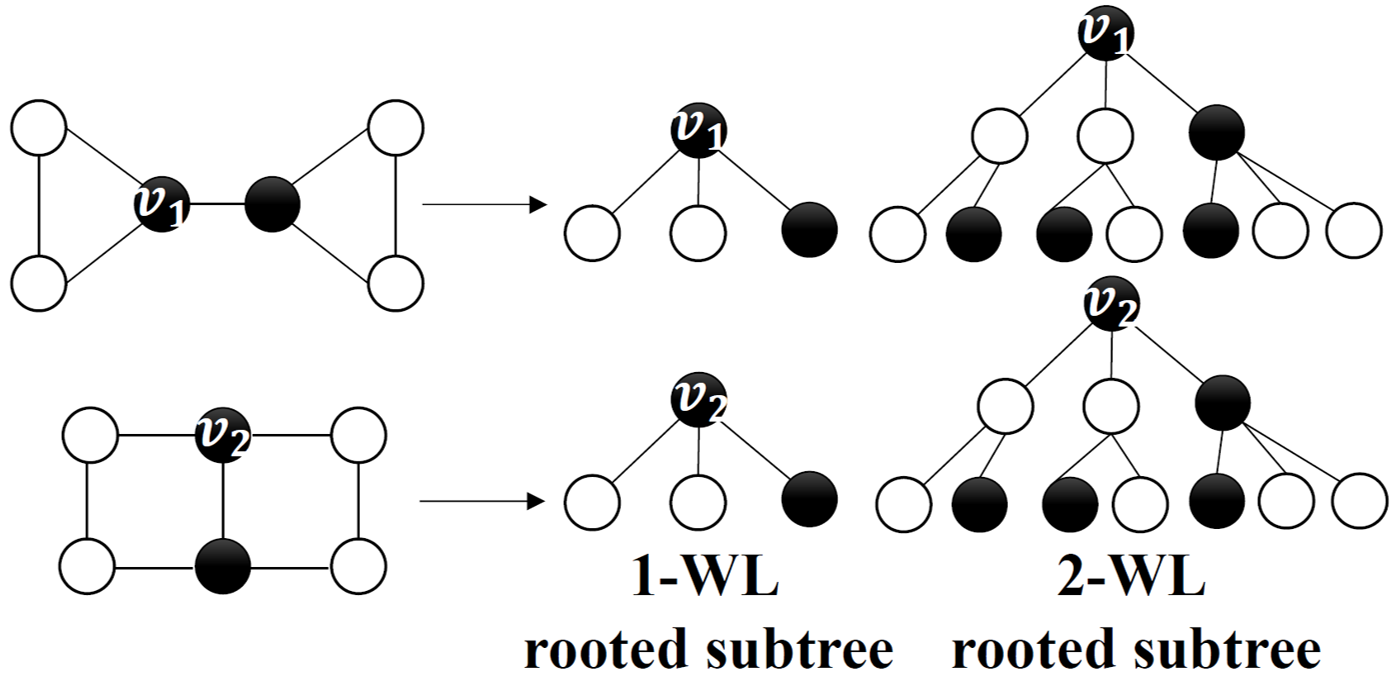}
    \caption{1- and 2-WL tests fail to distinguish the two graphs as they obtain the same rooted subtree (node coloring).}
    \label{fig:WLfail}
\end{figure}



In this work, we propose a novel approach, namely \textit{\underline{co}llective \underline{s}tructure knowledge-augmented \underline{g}raph \underline{n}eural \underline{n}etwork (CoS-GNN)}, to leverage a variety of informative structural knowledge of graphs through DA for enhancing the expressiveness of existing GNNs. Instead of implicitly using structural information in other DA methods, we explicitly extract collective, domain-adaptive graph structural statistics at the graph and node levels as additional structure features.
To fully leverage those augmented structural knowledge, we design a new message passing mechanism to respectively perform neighborhood aggregation on graph data using these augmented structure features and the original node attributes. Further, the new message passing can also model the interaction between the augmented features and the original node attributes. In doing so, our GNNs break down the upper bound of 1-WL tests and learn graph representations with significantly improved expressiveness (see the graph representations produced by CoS-GNN in Figure \ref{fig:motivation}(b)(c) vs. those yielded by the original GCN).

\begin{figure}
    \centering
    \includegraphics[width=12cm]{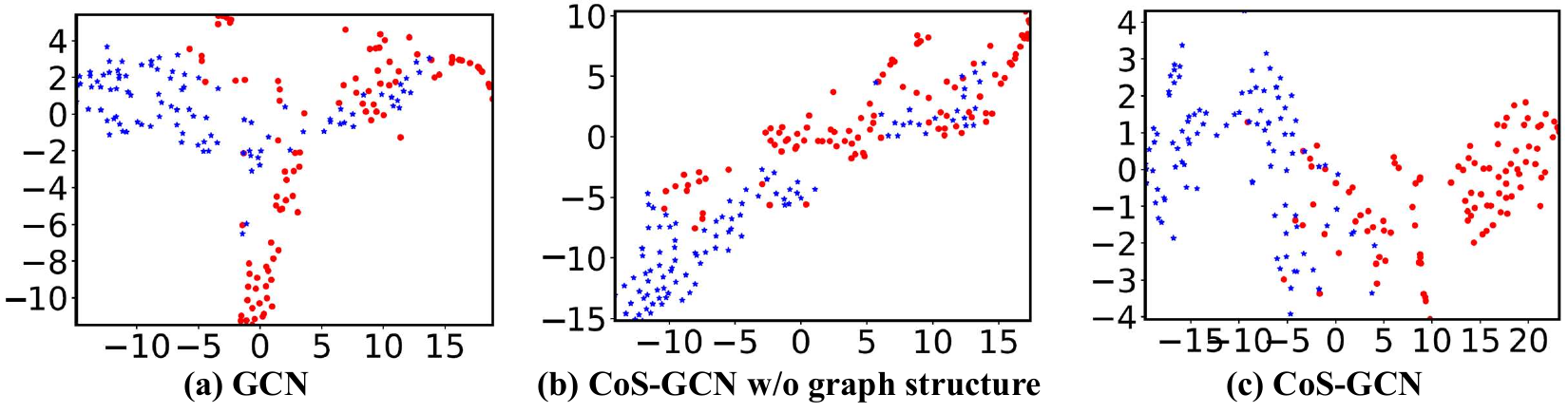}
    \caption{Graph representations of REDDIT-BINARY yielded by (b) CoS-GCN with augmented node-level structural features and (c) CoS-GCN with augmented structural features at both node and graph levels are more class-separable than those produced by (a) the original GCN.}
    \label{fig:motivation}
\end{figure}

In summary, our main contributions are as follows:
\begin{itemize}
    \item We introduce a novel collective structure knowledge augmented GNN approach (CoS-GNN) that explicitly harnesses a diverse set of node and graph-level structural information for enhancing the expressiveness of GNN-based graph representations. The approach is generic and applicable to different GNN backbones.
    \item To effectively leverage the augmented structural features, a new message passing scheme is introduced in CoS-GNN, which simultaneously performs neighborhood aggregation on the augmented features and the original node attributes, enabling the learning of graph representations with significantly enriched structural knowledge.
    \item Comprehensive experiments on 12 graph datasets demonstrate that CoS-GNN (i) significantly outperforms competing methods in graph classification task; 
    (ii) provides more discriminative information for anomaly detection task; (iii) is more generalized to out-of-distribution graphs.
\end{itemize}

\section{Related Work}
\subsection{Graph Neural Networks}
Graph neural networks (GNNs) have gained numerous attentions and remarkable success in the past few years~\cite{wu2020comprehensive, ju2022kgnn,luo2023towards,chikwendu2023comprehensive}. \citet{wu2020comprehensive} summarize various GNN models, among which message passing between nodes based on graph structure to learn graph representations is one of the most popular ones. They iteratively aggregate information from nodes and their neighbors to learn expressive representations of nodes. 
GCN~\cite{kipfsemi},   
GraphSAGE~\cite{hamilton2017inductive}, graph attention network (GAT)~\cite{velivckovicgraph} and graph isomorphism network (GIN)~\cite{xupowerful} are several most representative and state-of-the-art message-passing GNNs. However, since the message passing mechanism of GNNs mimics the 1-WL algorithm, the expressive power of these GNNs is limited and upper bounded by the Weisfeiler-Leman test~\cite{xupowerful,morris2019weisfeiler}, which restricts their performance in graph representation learning. The proposed CoS-GNN aims to improve the expressiveness of these popular GNNs by modeling a collective set of additional structural features.

\subsection{Graph Data Augmentation}
Motivated by the excellent achievements of data augmentation in image and text data~\cite{shorten2019survey,bayer2022survey}, graph data augmentation techniques are attracting increasing attention to improve the representation expressiveness obtained from GNNs in graph-domain tasks~\cite{ding2022data}. To solve the label scarcity in semi/un-supervised classification tasks, augmentation through changing the graph structure or node attribute, \eg, node/edge deletion, attribute masking, and subgraph sampling, is used to construct a contrastive or consistent learning framework in \cite{ju2022ghnn,ju2023unsupervised,luo2022clear,luo2023self,ju2023tgnn}. In supervised graph classification tasks, some augmentation methods are implemented to enhance the representation expressiveness of GNNs and further improve the classification performance. 
One direction is to enrich node features. 
For example, distance encoding is proposed in \cite{li2020distance} to generate and add extra node features.
\citet{sato2021random} add random node features, while GSN~\cite{bouritsas2022improving} and fast ID-GNN~\cite{you2021identity} use the count of various motifs to extend the node features. In \cite{liu2022local}, neighborhood features are augmented via a generative model conditioned on local structures and node features.
Except to node feature extension, augmenting the graph from the structure perspective is another popular approach. For example, \citet{zhang2021nested} and \citet{you2021identity} sample a subgraph for each node and use the subgraphs to compute node embeddings and add them to complement the original graph. The structural information used in these methods is local, while many useful graph-level structural information is ignored. To alleviate this issue, 
\citet{liu2022boosting} add a dummy node that connects to all existing nodes without affecting original node and edge properties for better graph representation learning. $\mathcal{G}$-mixup~\cite{han2022g} and graph transplant~\cite{park2022graph} mixup graphs to obtain more graph data for data-hungry tasks. 
\citet{papp2021dropgnn} instead perform augmentation through iteratively removing nodes randomly and executing multiple different runs on these node-dropout graphs. All these DA methods are focused on utilizing some specific types of additional features, like substructure and local structure variation, which often cannot generalize to graphs from different application domains. By contrast, our approach aims to harness collective, domain-adaptive node and graph features via a new message passing mechanism.

\section{Framework}
\subsection{Problem Statement}\label{subsec:problem}
This work focuses on the problem of graph representation learning. Specifically, given a set of graphs $\mathcal{G}=\{G_1, \cdots, G_N\}$, each graph $G=\{\mathcal{V}_G,\mathcal{E}_G\}$ contains a vertex/node set $\mathcal{V}_G$ and an edge set $\mathcal{E}_G$. The structure of $G$ is denoted by an adjacency matrix $A\in \mathrm{R}^{|\mathcal{V}_G|\times |\mathcal{V}_G|}$, where $|\mathcal{V}_G|$ is the number of nodes in $G$. $A(i,j)=1$ if an edge exists between node $v_i$ and $v_j$ ($\exists (v_i,v_j)\in \mathcal{E}_G$), and $A(i,j)=0$ otherwise. If a feature vector $\mathbf{x}_i^n\in\mathrm{R}^d$ is associated with each node $v_i\in\mathcal{V}_G$, the graph $G$ is an attributed graph, and otherwise $G$ is a plain graph. For a plain graph, we use the one-hot node label as the node attributes. Our goal is to learn a representation for each graph $G$ for further use in down-stream tasks like graph classification and anomaly detection. 

\begin{figure}
    \centering
    \includegraphics[width=12cm]{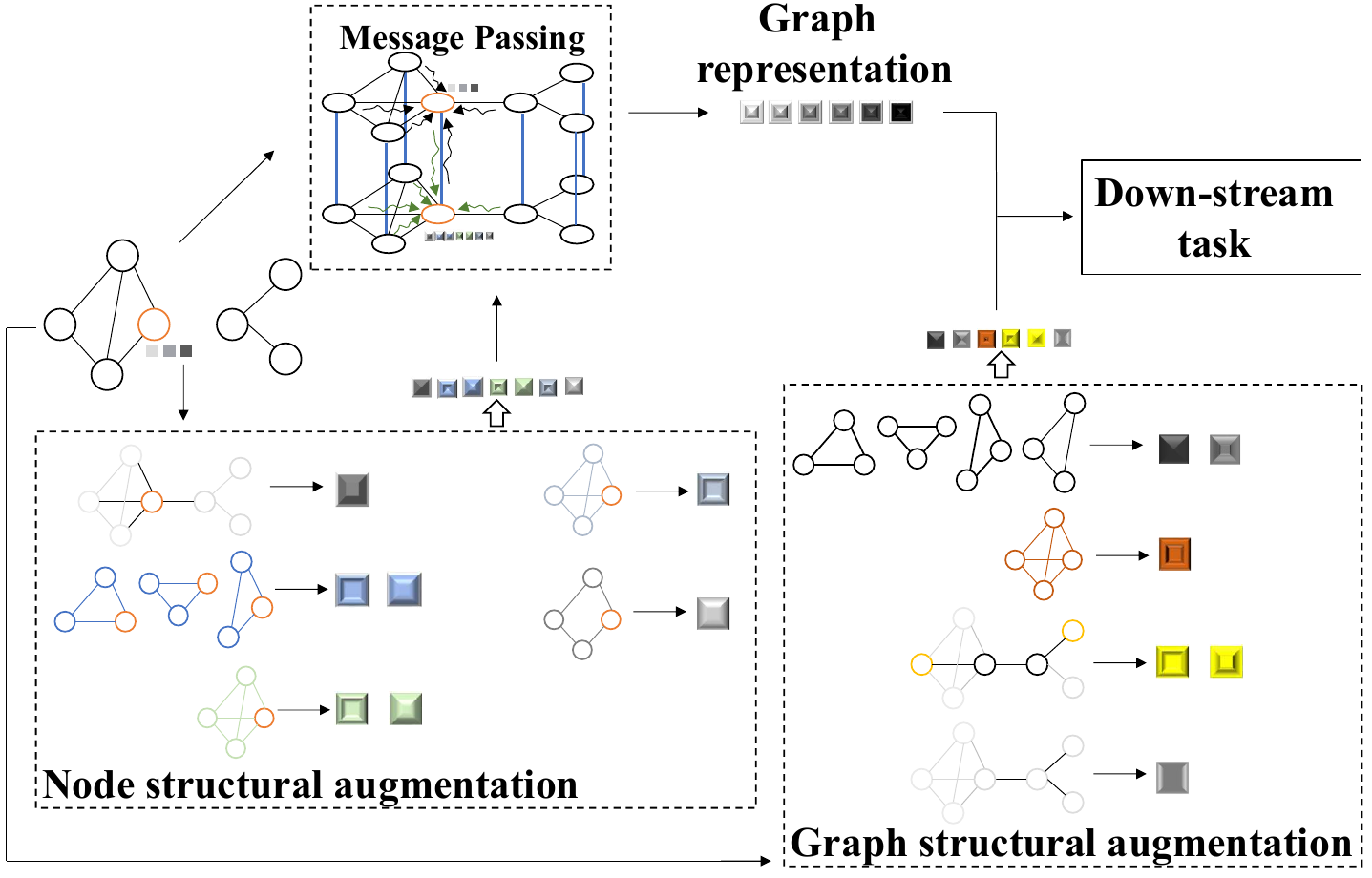}
    \caption{A schematic depiction of our CoS-GNN. Our CoS-GNN first calculates the specific node- and graph-level structural features. Then a new message passing mechanism is devised to utilize the original node attributes and the augmented node structural features to compute the graph representation, which is further combined with the graph structural augmentations for down-stream tasks.}
    \label{fig:framework}
\end{figure}

\subsection{The Proposed Framework}
We propose a novel collective structure knowledge-augmented graph neural network (CoS-GNN) that aggregates original and augmented structural features of single nodes and whole graph to learn expressive graph representations. The key intuition of CoS-GNN is to utilize various local (node) and global (graph) structural information to enrich the original graph structural knowledge, through which we can learn a more informative and discriminative graph representation. The overall procedure of CoS-GNN is illustrated in Figure~\ref{fig:framework}, which is composed of the following three major components:
\begin{itemize}
    \item \textit{Collective Graph Data Augmentation}.  In this component, we generate a diverse set of specific structural features for each graph $G$ (denoted by $\mathbf{x}^{gs}_G$) and each node $v_i$ in $G$ (denoted by $\mathbf{x}^{ns}_i$). These two types of features are added to augment each graph $G$ from the structural knowledge perspective. It is a component that can be done offline.
    \item \textit{Augmented Node-level Message Passing.} This component is designed to iteratively aggregate both the original and augmented node features, \ie, $\mathbf{x}_i^{n}$ and $\mathbf{x}^{ns}_i$, to learn the node representation $\mathbf{h}_i$ with significantly enriched structural knowledge for each node $v_i$. To this end, a new  message passing mechanism is introduced for this process. The node representations are then fed to a readout layer to gain the graph representation $\mathbf{h}_l$.
    \item \textit{Graph-level Representation Fusion.} This component aims to synthesize the learned graph representation $\mathbf{h}_l$ and the pre-defined graph-level structural features $\mathbf{x}^{gs}_G$ via concatenation/fully-connected layers to obtain the final representation $\mathbf{h}_g$. $\mathbf{h}_g$ is then fed to a down-stream graph-level learning task. 
\end{itemize}

\section{Model} \label{sec:model}
CoS-GNN is a generic framework. In this section, we introduce two instantiations of our CoS-GNN framework with the commonly-used GCN and GIN as the GNN backbone, namely CoS-GCN and CoS-GIN, respctively.

\subsection{Collective Graph Data Augmentation}
We first augment the graph via computing some important node and graph statistics, which serve as additional node and graph features to complement the original node attributes. This component is shared by different model instantiations, and it can be performed before the model training.

Specifically, we select and generate a number of widely-used and domain-adaptive node-level features, including the degree, triangle number, clique size, clique number, core number, cluster coefficient and square cluster coefficient,
resulting seven new features in $\mathbf{x}^{ns}_i$ for each node $v_i$. The last two coefficient measures capture the tendency of the
node to form relatively dense communities, while other measures are to capture substructural information from varying scales. The detailed definition of these features is as follows:
\begin{itemize}
    \item \textbf{Degree.} The degree of a node/vertex is the number of edges that are incident to the node, which is an important and commonly-used node structure statistic.
    \item \textbf{Triangle.} Triangle is a simple and direct structure, and we counts the number of triangles that use this node as a vertex. 
    \item \textbf{Clique.} The clique is a substructure, in which every two distinct nodes are adjacent. We calculate the size of the maximal clique and the number of maximal cliques containing each given node.
    \item \textbf{K-core.} A k-core is defined as a maximal subgraph that is composed of nodes with degree k or more, the core number of a node is the largest value k of a k-core containing the given node. We collect the core number of each node as one of the augmented node-structural characteristics. 
    \item \textbf{Quantized values.} Beyond the number, we also calculate the triangle/square clustering coefficient for each node, which are the fraction of possible triangles/squares through the given node that exist. This quantifies the tendency of nodes to form relatively dense network groups, \ie, triangles or squares. 
\end{itemize}

For graph-structural-level augmentation, we utilize a variety of important global statistics, including triangle number, clique size, the existence of bridge, average clustering coefficient, average global efficiency, and average local efficiency, to generate six graph-level structural features $\mathbf{x}^{gs}_{G}$ for each graph $G$. The three coefficients quantify the abundance of dense communities in the graph and the other statistics are the measurement of the node-to-node communication effectiveness within a graph. Detailed definition of each statistic is presented as follows:
\begin{itemize}
    \item \textbf{Triangle.} We use the total number of triangles as one graph feature.  
    \item \textbf{Clique.} We count the size of the largest clique in the graph as the second graph feature.
    \item \textbf{Bridge.} Another employed statistic is the existence of a bridge in the graph, which is an edge whose removal will cause the number of connected components of the graph to increase. The bridge is a specific characteristic of the graph. 
    \item \textbf{Quantized values.} The average clustering coefficient for the graph is also included to measure the abundance of dense network groups in the graph. The efficiency of a pair of nodes is the multiplicative inverse of the shortest path distance between the nodes, and we calculate the average efficiency of all pairs of nodes in the graph, called average global efficiency, as one of the graph-structural statistics to measure the effectiveness of communication in the graph. The local efficiency of a node is defined as the average global efficiency of the subgraph induced by the neighbors of the node. We utilize the average local efficiency, which is the mean of local efficiencies of each node in the graph, as another statistic. 
\end{itemize}

These collective statistics consider the configurations with different scales and complexities, which are normally adaptive to graphs from different domains.

\subsection{Augmented Node-level Message Passing}
Once the augmented node structural feature $\mathbf{x}^{ns}$ is obtained, we then aggregate the original feature $\mathbf{x}^{n}$ and the augmented features $\mathbf{x}^{ns}$ to learn the original node attributes and their interaction with augmented structural knowledge of nodes. One straightforward solution that many previous methods do is to concatenate them directly and then apply GNN to perform the commonly-used neighborhood aggregation on nodes using the combined feature. This approach is easy-to-implement but fails to capture intricate interactions (e.g., higher-order and/or non-linear interactions) between the original node attributes and augmented features.
To address this issue, we propose a novel message passing mechanism for effectively capturing the diverse knowledge embedded in the two types of features and their interactions. Our experiments also show that our message passing mechanism outperforms the conventional message passing with the concatenated input (see results in Table \ref{ablation}). 

To this end, we construct a dual-graph structure that facilitates the modeling of the original node features, the modeling of the collective augmented node features, and the modeling of the interactions between these two types of features in each message passing step. In detail, given a graph $G$, we construct a new graph $\hat{G}$ with the same node and structure as the original graph but with the $\mathbf{x}^{ns}$ as its node attributes and link the corresponding nodes of $G$ and $\hat{G}$. This results in our augmented graph with a dual-graph structure, $G^{'}$. 


\subsubsection{Message Passing in CoS-GCN} Next we perform message passing on the dual-graph structure $G^{'}$. When using GCN as our GNN backbone, the adjacent matrix $A^{'}$ of $G^{'}$ can be written as 
\begin{equation}
\begin{aligned}
   A^{'} = 
    \begin{pmatrix}
        A & I \\
        I & A
    \end{pmatrix} ,
\end{aligned}
\end{equation}
and the degree matrix $D^{'}$ is 
\begin{equation}
\begin{aligned}
    D^{'} = 
    \begin{pmatrix}
        D+I & 0 \\
        0 & D+I
    \end{pmatrix},
\end{aligned}
\end{equation}
where $A$ and $D$ are the adjacent and degree matrices of $G$ and $I$ is the identity matrix. We then convolute the node features of $G^{'}$ by 
\begin{equation}\label{eq:convolution}
\small{
\setlength{\arraycolsep}{1.2pt}
\begin{aligned}
& H^{(l)} =\sigma\left( \tilde{D}^{'-\frac{1}{2}} \tilde{A}^{'} 
 \tilde{D}^{'-\frac{1}{2}}
 \begin{pmatrix}
      H_n^{(l-1)}\\
      H_{ns}^{(l-1)}
 \end{pmatrix}W^{(l)}\right) \\
& = \sigma \left(\begin{pmatrix}
     \tilde{D} + I &0 \\
     0 & \tilde{D} + I
 \end{pmatrix}^{-\frac{1}{2}}
 \begin{pmatrix}
        \tilde{A} & I \\
        I & \tilde{A}
    \end{pmatrix}
    \begin{pmatrix}
     \tilde{D}+I &0 \\
     0 & \tilde{D}+I
 \end{pmatrix}^{-\frac{1}{2}}
 \begin{pmatrix}
     H_n^{(l-1)}W^{(l)} \\
     H_{ns}^{(l-1)}W^{(l)}
 \end{pmatrix}\right) \\
& = \sigma\left(\begin{pmatrix}
  (\tilde{D}+I)^{-\frac{1}{2}}\tilde{A}(\tilde{D} + I)^{-\frac{1}{2}} & \tilde{D} + I\\
  \tilde{D} + I & (\tilde{D} + I)^{-\frac{1}{2}}\tilde{A}(\tilde{D} + I)^{-\frac{1}{2}}
  \end{pmatrix} 
  \begin{pmatrix}
     H_{n}^{(l-1)}W^{(l)} \\
     H_{ns}^{(l-1)}W^{(l)}
 \end{pmatrix}  \right)\\
& = \sigma
\begin{pmatrix}
  (\tilde{D} + I)^{-\frac{1}{2}}\tilde{A}(\tilde{D}+I)^{-\frac{1}{2}}H_{n}^{(l-1)}W^{(l)}+(\tilde{D}+I)H_{ns}^{(l-1)}W^{(l)}\\
  (\tilde{D}+I)^{-\frac{1}{2}}\tilde{A}(\tilde{D}+I)^{-\frac{1}{2}}H_{ns}^{(l-1)}W^{(l)}+(\tilde{D}+I)H_{n}^{(l-1)}W^{(l)}
  \end{pmatrix},
\end{aligned}
}
\end{equation}
where $\tilde{A}=A+I$, $\tilde{D}=D+I$ and $\tilde{D}^{'}=D^{'}+I$. $H_{n}^{(l-1)}$ and $H_{ns}^{(l-1)}$ is the node representation matrices of $G$ and $\hat{G}$ after the $(l-1)$-th convolutional layer. The feature input of the $0_{th}$ layer is node feature matrices $X^n$ and $X^{ns}$, which stack $\mathbf{x}_i^n$ and $\mathbf{x}_i^{ns}$ ($v_i\in G$) across all graph nodes, respectively. $H^{(l)}$ is the node representation matrix of all nodes after the $l_{th}$ convolutional layer. $W^{(l)}$ is the parameter matrix of the $l_{th}$ convolutional layer. $\sigma$ is a non-linear activation function. 

Since the original node features and augmented node structural features can be very different, we employ two different convolutional filters (i.e., with different convolutional weights) to learn their knowledge as follows:
\begin{equation}
    \begin{aligned}
    H^{(l)}=  \begin{pmatrix}
H_n^{(l)}\\
H_{ns}^{(l)}
 \end{pmatrix} 
\approx \sigma
\begin{pmatrix}
  (\tilde{D}+I)^{-\frac{1}{2}}\tilde{A}(\tilde{D}+I)^{-\frac{1}{2}}H_{n}^{(l-1)}W^{(l)}_{n}+(\tilde{D}+I)H_{ns}^{(l-1)}W^{(l)}_{ns}\\
  (\tilde{D}+I)^{-\frac{1}{2}}\tilde{A}(\tilde{D}+I)^{-\frac{1}{2}}H_{ns}^{(l-1)}W^{(l)}_{ns}+(\tilde{D}+I)H_{n}^{(l-1)}W^{(l)}_{n}
  \end{pmatrix},
    \end{aligned}
\end{equation}
where $W^{(l)}_{n}$ and $W^{(l)}_{ns}$ are the parameter matrices of $l_{th}$ layer for two types of features respectively, and $H_n^{(l)}$ and $H_{ns}^{(l)}$ are the node representation matrices of $G$ and $\hat{G}$ after current $l_{th}$ message passing layer.

After $L$ message-passing layers, we aggregate the node representations of two graphs $G$ and $\hat{G}$ in each layer to obtain the final node representation matrix as follows:
\begin{equation}\label{eq:noderepre}
    \begin{aligned}
        H = \mathrm{AGGATE}_n(H_n^{(1)},\cdots, H_n^{(L)}, H_{ns}^{(1)},\cdots, H_{ns}^{(L)}), 
    \end{aligned}
\end{equation}
where $\mathrm{AGGATE}_n(\cdot)$ is an aggregate function, and concatenation is used in our experiments; 
$H$ denotes the representation matrix that encapsulates the representation of all individual nodes. Then a readout function is applied to obtain the learned graph representation $\mathbf{h}_l$.

\subsubsection{Message Passing in CoS-GIN} The framework can also be extended to other GNN backbones.
Here we now present how the proposed message passing method can be adopted to the case using GIN as our backbone. 
To this end, the GIN-based message passing is re-defined as follows:
\begin{equation}
    \begin{aligned}
        &\mathbf{h}^{(l)}_{v_i,n} = \mathrm{MLP}_n^{(l)}\left((1+\epsilon^{(l)})\mathbf{h}^{(l-1)}_{v_i,n}+\sum_{v_j\in\mathcal{N}(v_i)}\mathbf{h}^{(l-1)}_{v_j,n}+\mathbf{h}^{(l-1)}_{v_i,ns}\right),\\
       & \mathbf{h}^{(l)}_{v_i,ns}= \mathrm{MLP}_{ns}^{(l)}\left((1+\epsilon^{(l)})\mathbf{h}^{(l-1)}_{v_i,ns}+\sum_{v_j\in\mathcal{N}(v_i)}\mathbf{h}^{(l-1)}_{v_j,ns}+\mathbf{h}^{(l-1)}_{v_i,n}\right),
    \end{aligned}
\end{equation}
where MLP is a multi-layer perceptron layer. Then 
we combine the obtained representations via summation.
In detail,
\begin{equation}
    \begin{aligned}
        &\mathbf{h}_{n} = \sum_{l}\mathrm{FC}^{(l)}_n(\mathrm{READOUT}(H^{(l)}_{n})),\\
        &\mathbf{h}_{ns} = \sum_{l}\mathrm{FC}^{(l)}_{ns}(\mathrm{READOUT}(H^{(l)}_{ns})),
    \end{aligned}
\end{equation}
where $\mathrm{FC}^{(l)}_n(\cdot)$ and $\mathrm{FC}^{(l)}_{ns}(\cdot)$ are fully-connected layers in the $l_{th}$ layer. We gain the learned graph representation $\mathbf{h}_l$ through adding them together:
\begin{equation}
    \begin{aligned}
        \mathbf{h}_l = \mathbf{h}_{n}+\mathbf{h}_{ns}.
    \end{aligned}
\end{equation}

The key insight of the message passing mechanism in CoS-GIN is analogous to that in CoS-GCN, but they are derived at different representation levels: matrix of node representations in CoS-GCN vs. vectorized node representations in CoS-GIN, which is mainly done for presentation brevity.

\subsection{Graph-level Representation Fusion}
After gaining the learned graph representations
$\mathbf{h}_l$, we then employ MLP to synthesize it, together with the augmented graph-structural feature $\mathbf{x}^{gs}$, to learn the final graph representations. In detail, we input $\mathbf{h}_l$ and $\mathbf{x}^{gs}$ into the two different MLPs as:
\begin{equation}
    \begin{aligned}
        \mathbf{h}_l^{MLP} = \mathrm{MLP}^{l}(\mathbf{h}_l), \mathbf{h}_{gs}^{MLP} = \mathrm{MLP}^{gs}(\mathbf{x}^{gs}),
    \end{aligned}
\end{equation}
where $\mathrm{MLP}^{l}(\cdot)$ and $\mathrm{MLP}^{gs}(\cdot)$ are MLP functions. We then integrate the information learned to gain the final graph representation:
\begin{equation}\label{eq:graphrepresen}
    \begin{aligned}
    \mathbf{h}_g = \mathrm{AGGATE}_g(\mathbf{h}_l^{MLP}, \mathbf{h}_{gs}^{MLP}),    \end{aligned}
\end{equation}
where $\mathrm{AGGATE}_g(\cdot)$ is the aggregation function and we use concatenation in our model. Then the graph representation can be used for any down-stream tasks. Algorithm~\ref{cosgcn} presents the procedure of CoS-GCN to calculate graph representations, which can be later input to any down-stream tasks.

\renewcommand{\algorithmicrequire}{\textbf{Input:}}
\renewcommand{\algorithmicensure}{\textbf{Output:}}
\begin{algorithm}[h]
     \caption{Graph representation learning via CoS-GCN}\label{cosgcn} 
     \begin{algorithmic}[1]
     \REQUIRE Graph set $\mathcal{G}=\{G_i\}_i$, two GNNs with parameter set $\{W_n^{(1)},...,W_n^{(L)}\}$ and $\{W_{ns}^{(1)},...,W_{ns}^{(L)}\}$, two MLP functions $MLP^l(\cdot)$ and $MLP^{gs}(\cdot)$
     \ENSURE Graph representation $\mathbf{h}_g$ for $G \in \mathcal{G}$
     \STATE Augment node and graph structural knowledge to obtain $X^{ns}$ and $\mathbf{x}^{gs}$ for each $G\in\mathcal{G}$
     \FOR {$G$ in $\mathcal{G}$}
     \STATE Compute $H^{(l)}, l\in\{1,\cdots,L\}$ with Eq.(4) 
     \STATE Aggregate $H^{(l)}, l\in\{1,\cdots,L\}$ with Eq.(5) to obtain $H$
     \STATE Readout $H$ to obtain $\mathbf{h}_l$
     \STATE Input $\mathbf{h}_l$ and $\mathbf{x}^{gs}$ into $MLP^l$ and $MLP^{gs}$ respectively to gain $\mathbf{h}_l^{MLP}$ and $\mathbf{h}_{gs}^{MLP}$ 
     \STATE Aggregate $\mathbf{h}_l^{MLP}$ and $\mathbf{h}_{gs}^{MLP}$ to obtain the final representation $\mathbf{h}_g$ for $G$
    \ENDFOR
   
    \RETURN Graph representation $\mathbf{h}_g$ for $G \in \mathcal{G}$
     \end{algorithmic}
 \end{algorithm}



\subsection{Expressive Power of CoS-GNN}
This section discusses the expressive power of CoS-GNN. When comparing the expressiveness of GNN models, we can define that:

\begin{myDef}
For any two GNN models: A and B, model A is said to be more expressive than model B, if and only if 1) model A can distinguish all samples that model B can distinguish, and 2) there exists samples which can be distinguished by model A but not by model B.     
\end{myDef}

To measure the expressive power of GNNs, the Weisfeiler-Lehman (WL) graph isomorphism test is commonly used, which is a family of algorithms (k-WL, k-FWL) used to test graph isomorphism~\cite{maron2019provably, grohe2017descriptive}. Two graphs $G_1$ and $G_2$ are called isomorphic if there exists an edge and color preserving bijection $\phi:\mathcal{V}_1\rightarrow\mathcal{V}_2$. Next we show the strong expressive power of our model CoS-GNN from the WL-test perspective:
\begin{theorem}
CoS-GNN is not less expressive than 1-WL and 2-WL tests.
\end{theorem}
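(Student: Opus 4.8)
The plan is to prove the claim by a simulation argument followed by two standard equivalences. Under the expressiveness notion given above, ``not less expressive'' reduces to showing that CoS-GNN can distinguish every pair of non-isomorphic graphs that the WL tests distinguish; so it suffices to exhibit, for each such pair, a choice of CoS-GNN parameters that separates it. I would therefore (i) show that the CoS-GNN function class contains the vanilla message-passing GNN (GIN) as a special case, and then (ii) chain the known facts $\text{GIN}\equiv$ 1-WL and 1-WL $\equiv$ 2-WL.

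First I would focus on the CoS-GIN instantiation and argue it is at least as powerful as plain GIN. Inspecting the coupled update, the $n$-channel recursion for $\mathbf{h}^{(l)}_{v_i,n}$ is exactly the GIN aggregation $(1+\epsilon^{(l)})\mathbf{h}^{(l-1)}_{v_i,n}+\sum_{v_j\in\mathcal{N}(v_i)}\mathbf{h}^{(l-1)}_{v_j,n}$, augmented by an additive interaction term $\mathbf{h}^{(l-1)}_{v_i,ns}$. Since $\mathrm{MLP}^{(l)}_n$ is a universal approximator, one can select weights that (a) render the $n$-channel insensitive to the $ns$-interaction term and (b) make the two readout branches embed into disjoint coordinate blocks, so that the summation $\mathbf{h}_l=\mathbf{h}_n+\mathbf{h}_{ns}$ behaves like a concatenation and the $\mathbf{h}_n$ block reproduces precisely the graph embedding vanilla GIN would compute. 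Under this parameterization CoS-GIN realizes the GIN mapping verbatim, hence separates any pair GIN separates. The same reasoning applies block-wise to CoS-GCN through Eq.~(4), where $\mathrm{AGGATE}_n$ is already a concatenation and the $n$-block of the convolution matches the standard GCN propagation.

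I would then invoke two standard facts, both citable from the WL literature already referenced: GIN is as powerful as the 1-WL test~\cite{xupowerful}, and the 1-WL and 2-WL tests have identical distinguishing power~\cite{morris2019weisfeiler,grohe2017descriptive,maron2019provably}. Composing these with the simulation step yields that CoS-GNN distinguishes every pair of graphs distinguishable by 1-WL and by 2-WL, which is exactly the claim. If a strict separation is desired as well, I would add that the augmented node features $\mathbf{x}^{ns}$ (degree, triangle/clique counts, core number, clustering coefficients) are permutation-invariant node colorings that can differ across 1-WL-equivalent graphs --- precisely the situation of Figure~\ref{fig:WLfail} --- so feeding them as initial colors lets CoS-GNN separate graphs on which 1-WL/2-WL stall, supplying the extra distinguishable pair required by condition~2 of the definition.

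The main obstacle is the injectivity bookkeeping in the simulation step. Because the two channels are coupled through the interaction terms and, in CoS-GIN, merged by summation rather than concatenation, I must verify that the reduction to vanilla GIN does not lose information through an accidental cancellation in $\mathbf{h}_n+\mathbf{h}_{ns}$, and that the per-layer MLPs can simultaneously realize an injective multiset aggregation (as in the GIN expressiveness proof) while suppressing the $ns$-interaction. Establishing this cleanly --- via the disjoint-embedding trick above, or by appealing to the genericity of injective MLPs on finite multisets --- is the crux of the argument; the remaining pieces are direct applications of the cited equivalences.
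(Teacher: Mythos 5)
Your proposal is correct in substance but proves the bound by a genuinely different route. You argue by \emph{containment}: choose parameters so that CoS-GIN collapses to vanilla GIN (block-separating the $n$- and $ns$-channels so the $n$-channel reproduces the GIN recursion exactly), then chain the known facts $\mathrm{GIN}\equiv\text{1-WL}$ and $\text{1-WL}\equiv\text{2-WL}$. The paper instead argues directly by induction on the colour refinement: it assumes every aggregation/combination function in the pipeline ($\mathrm{AGGATE}_g$, $\mathrm{AGGATE}_n$, $\mathrm{COM}$, the neighbourhood aggregator) is injective, supposes the final CoS-GNN representations of two graphs coincide, and unwinds the injectivity layer by layer (following Xu et al.\ and Bouritsas et al.) to conclude that the 1-WL colours must coincide as well; the 1-WL/2-WL equivalence is then invoked exactly as you do. Your reduction is more modular --- it reuses the GIN result as a black box instead of redoing the induction --- while the paper's argument covers the abstract framework without constructing a degenerate parameterization, at the price of carrying explicit injectivity hypotheses. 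Two caveats on your version: first, the $ns$-term is summed into the MLP input \emph{before} the MLP acts, so suppressing it genuinely requires your disjoint-coordinate-block device rather than mere weight suppression inside the MLP --- you correctly flag this as the crux, and it does go through. Second, for CoS-GCN your chain bottoms out at vanilla GCN, whose normalized-mean aggregation is not injective and hence not 1-WL-equivalent, so that instantiation still rests on the same idealized injectivity assumptions that the paper's proof states explicitly. Your closing remark on strict separation (condition 2 of the definition, via the augmented features on the graphs of Figure~\ref{fig:WLfail}) mirrors the paper's discussion \emph{after} the theorem; the stated theorem and its proof concern only the ``not less expressive'' direction.
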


\begin{proof}
We first consider the comparison with 1-WL test. This equals to prove such statement: If CoS-GNN deems that two graphs are isomorphic, then 1-WL test will also deem them isomorphic. If after k iterations, the CoS-GNN regards two graphs $G_1$ and $G_2$ are isomorphic, we have $\mathbf{h}_{1,g}^{(k)}=\mathbf{h}_{2,g}^{(k)}$. Assuming that the $\mathrm{AGGATE_g}$ is injective, we can obtain that $\mathbf{h}_{1,l}^{MLP(k)}=\mathbf{h}_{2,l}^{MLP(k)}$ and $\mathbf{h}_{1,gs}^{MLP(k)}=\mathbf{h}_{2,gs}^{MLP(k)}$, followed by $\mathbf{h}_{1,l}^{(k)}=\mathbf{h}_{2,l}^{(k)}$ and $\mathbf{x}^{gs}_1=\mathbf{x}^{gs}_2$. Thus we have $H_1^{(i)}=H_2^{(i)}$ and then $\mathbf{h}_{v,n}^{(i)}=\mathbf{h}_{u,n}^{(i)}$ and $\mathbf{h}_{v,ns}^{(i)}=\mathbf{h}_{u,ns}^{(i)}$ for $v\in\mathcal{V}_{G_1}$, $u\in\mathcal{V}_{G_2}$ and $i=1,...,k$ when the $\mathrm{AGGATE_n}$ is injective. 

What we need to prove next is that the color extracted by 1-WL for node $v$ and $u$ is same, \ie $c_v^{(k)}=c_u^{(k)}$. We use the induction as \cite{bouritsas2022improving} to demonstrate this. For $i=0$, since the initial node features are the same for both CoS-GNN and 1-WL, we can get $c_v^{(0)}=c_u^{(0)}$ when $\mathbf{h}_{v,n}^{(0)}=\mathbf{h}_{u,n}^{(0)}$. Suppose $\mathbf{h}_{v,n}^{(j)}=\mathbf{h}_{u,n}^{(j)}, \mathbf{h}_{v,ns}^{(j)}=\mathbf{h}_{u,ns}^{(j)}\Rightarrow c_v^{(j)}=c_u^{(j)}$ holds for $j=1,\cdots,k-1$, we later need to prove that it holds for $j=k$. Since each node representation, including $\mathbf{h}_{v,n}^{(j)}$ and $\mathbf{h}_{v,ns}^{(j)}$, is calculated by a $\mathrm{COM}$ function, if $\mathrm{COM}$ is injective, we have $\mathbf{h}_{v,n}^{(k-1)}=\mathbf{h}_{u,n}^{(k-1)}$, $\mathbf{h}_{v,ns}^{(k-1)}=\mathbf{h}_{u,ns}^{(k-1)}$, $\mathrm{AGGATE}(\{\mathbf{h}_{q,n}^{(k-1)}|q\in\mathcal{N}_v\})=\mathrm{AGGATE}(\{\mathbf{h}_{p,n}^{(k-1)}|p\in\mathcal{N}_u\})$ and $\mathrm{AGGATE}(\{\mathbf{h}_{q,ns}^{(k-1)}|q\in\mathcal{N}_v\})=\mathrm{AGGATE}(\{\mathbf{h}_{p,ns}^{(k-1)}|p\in\mathcal{N}_u\})$ when $\mathbf{h}_{v,n}^{(k)}=\mathbf{h}_{u,n}^{(k)}$ and $\mathbf{h}_{v,ns}^{(k)}=\mathbf{h}_{u,ns}^{(k)}$. According to Lemma 5 from \cite{xupowerful}, there exists an injective function. When $\mathrm{AGGATE}$ is injective, we have $\mathbf{h}_{q,n}^{(k-1)}=\mathbf{h}_{p,n}^{(k-1)}$ and $\mathbf{h}_{q,ns}^{(k-1)}=\mathbf{h}_{p,ns}^{(k-1)}$, which lead to $c_q^{(k-1)}=c_p^{(k-1)}$ for 
$q\in\mathcal{N}_v$ and $p\in\mathcal{N}_u$. Since we have $c_u^{(k-1)}=c_v^{(k-1)}$ according to the induction hypothesis, we can get $c_u^{(k)}=c_v^{(k)}$. Therefore, the 1-WL test regards two graphs isomorphic if the CoS-GNN regards them isomorphic. 

Since 1-WL and 2-WL test have equivalent discrimination power~\cite{maron2019provably,zhang2021nested}, CoS-GNN is also at least as expressive as 2-WL test.
\end{proof}
The theorem states that CoS-GNN is at least as expressive as 1-WL and 2-WL tests. 
Some graphs that 1-WL and 2-WL tests cannot distinguish can be identified by our CoS-GNN. For example, 1-WL and 2-WL fail to distinguish the two graphs in Figure~\ref{fig:WLfail}, whereas CoS-GNN can easily differentiate them with the augmented features. Thus, our CoS-GNN can often learn more expressive representations than popular GNNs since they are mainly based on the 1-WL test, when handling complex graph datasets. For example, \citet{chen2020can} have shown that MPNNs cannot perform induced-subgraph-count of any connected pattern consisting of 3 or more nodes. For graphs with subgraphs that MPNNs cannot learn to count, there would be some pairs of graphs with different number of such uncounted subgraphs that are regarded as isomorphic by MPNNs. On the other hand, CoS-GNN can discriminate these graphs through including structural features that differentiate these subgraphs. As shown in Figure~\ref{fig:WLfail}, the two graphs cannot be distinguished by MPNNs, but they can be differentiated by the triangle counting for both nodes and graphs, and the existence of bridge in the graphs as well.

When compared with higher-order WL tests, we can also observe that our CoS-GNN can distinguish graphs that 2-FWL test (which is equivalent to 3-WL test~\cite{maron2019provably}) fails to identify, meaning that 3-WL test is not more expressive than our CoS-GNN. For example, \citet{arvind2020weisfeiler} and \citet{bouritsas2022improving} have shown that the 2-FWL test fails to distinguish the well-known Rook’s 4×4 and Shrikhande graphs, as illustrated in Figure~\ref{rookshrik}. However, the clique features incorporated into our CoS-GNN model help effectively discriminate these two graphs.

\begin{figure}
    \centering
    \includegraphics[width=8cm]{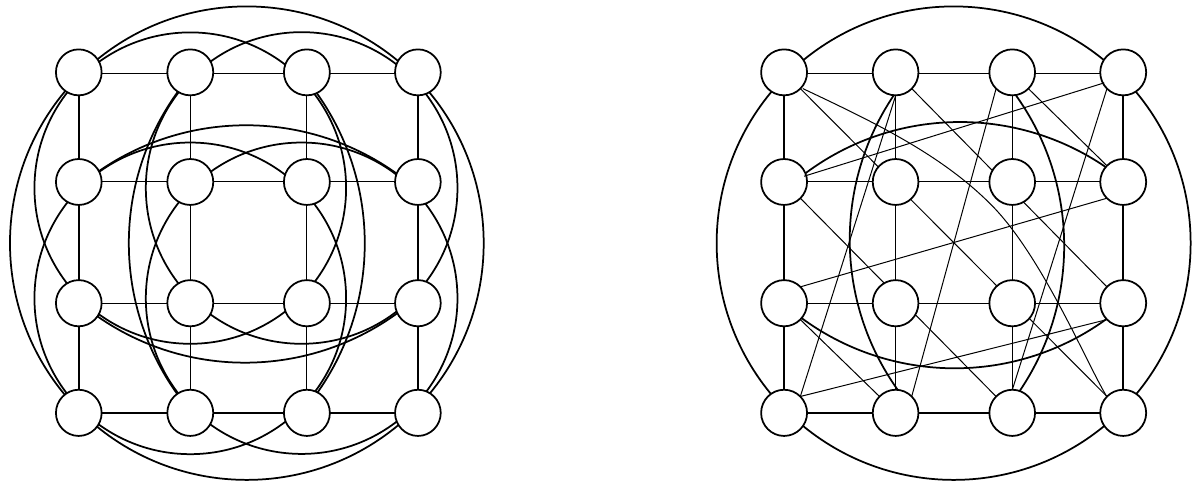}
    \caption{The strongly regular Rook’s 4×4 graph (left) and Shrikhande graph (right) \cite{bouritsas2022improving, arvind2020weisfeiler}. The 3-WL/2-FWL test is not able to deem them as non-isomorphic. Rook’s 4×4 graph possesses 4-cliques while the Shrikhande graph features 5-rings, which are not present in Rook’s.}
    \label{rookshrik}
\end{figure}

\subsection{Time Complexity Analysis}
In this section, we analyze the time complexity of CoS-GNN. The computation cost mostly concentrates on the feature extraction stage and the message passing stage. Let $n$ and $m$ be the number of nodes and edges in the graph respectively, in the feature learning phase, the degree and triangle counting cost are $\mathcal{O}(n)$ and $\mathcal{O}(n^2)$ time respectively. The complexity of clique and core finding are respectively bounded by $\mathcal{O}(n*3^n)$ and $\mathcal{O}(n+m)$. The computation of triangle and square clustering coefficient is $\mathcal{O}(n^2)$. The bridge finding needs $\mathcal{O}(n+m)$ time. The average clustering coefficient, average global and local efficiency require $\mathcal{O}(n^2)$, $\mathcal{O}(n^3)$ and $\mathcal{O}(n^4)$ respectively. Therefore, the feature extraction stage requires $\mathcal{O}(n^4+n*3^n+m)$ time. As for the message passing stage, the time complexity of our CoS-GNN equals to the corresponding vanilla GNN. Thus, the total time complexity of our CoS-GNN is $\mathcal{O}(n^4+n*3^n+m)+\mathcal{O}_{GNN}$.

\section{Experiments and Results}
\subsection{Datasets}
We perform experiments on 12 publicly available datasets from the TUDataset graph classification benchmark~\cite{Morris+2020}
to justify the effectiveness of our CoS-GNN. The detailed information of the datasets is displayed in Table~\ref{datasets}.

\begin{table}[htbp]
\caption{The detailed information of 12 public datasets. The following acronyms, PROTEINS$\_$full (PROTS$\_$full), IMDB-BINARY (I-BINARY), IMDB-MULTI (I-MULTI), REDDIT-BINARY (R-BINARY) and
REDDIT-MULTI-5K (R-MULTI), are used. The `binary' in the `Class' column denotes the dataset is for binary classification while `multi' implies multi-class classification. The `\#Graphs' is the total number of graphs in the dataset and the `\#Nodes' means the average number of nodes in the dataset. The `$\checkmark$' in the `Attribute' column indicates the data contains attributed graphs, and otherwise they contain only plain graphs.}
\centering
\label{datasets}
\setlength{\tabcolsep}{0.6mm}
\scalebox{0.8}{
\begin{tabular}{l|ccccc}
\hline\hline
\textbf{Dataset} & \textbf{Area} & \textbf{Class}&  \textbf{\#Graphs} &  \textbf{\#Nodes} & \textbf{Attribute} \\
\hline
BZR & molecule & binary &  405 & 35.75 & $\checkmark$ \\
COX2 & molecule & binary &  467 &  41.22 & $\checkmark$\\
DD & bioinformatics & binary &  1178 &  284.32 & - \\
I$-$BINARY & social & binary & 1000 & 19.77 & - \\
I$-$MULTI & social & multi & 1500  &  13.00 &- \\
MUTAG & molecule & binary & 188  & 17.93 &- \\
NCI1 & molecule & binary & 4110  &  29.87&-  \\
NCI109 & molecule & binary & 4127& 29.68 &-   \\
PROTS$\_$full & bioinformatics & binary &  1113  & 39.06 &$\checkmark$	 \\
R$-$BINARY & social & binary & 	2000 & 	429.63& - \\
R$-$MULTI & social & multi & 	4999 &	508.52 &- \\
ENZYMES & bioinformatics & multi &600 &  32.63 &$\checkmark$ \\
\hline\hline
\end{tabular}
}
\end{table}

\subsection{Competing Methods and Evaluation Metrics}
Our method CoS-GNN is compared with 13 state-of-the-art (SOTA) methods:
\begin{itemize}
    \item \textbf{Graph kernels.} We use two graph kernels, \ie \textbf{Weisfeiler-lehman subtree kernel (WL)}~\cite{shervashidze2011weisfeiler} and \textbf{Propagation graph kernels (PK)}~\cite{neumann2016propagation} as baselines.
    \item \textbf{Basic graph neural networks.} We consider four popular networks, \ie, \textbf{GCN}~\cite{kipfsemi}, \textbf{SAGE}~\cite{hamilton2017inductive}, \textbf{GAT}~\cite{velivckovicgraph} and \textbf{GIN}~\cite{xupowerful}, as the network baselines.
    \item \textbf{GNN-based augmentation methods.} We also compare CoS-GNN with several augmentation models that are built based on GNNs, including $\mathbf{\mathcal{G}}$\textbf{-mixup}~\cite{han2022g}, \textbf{Dummy} \cite{liu2022boosting}, \textbf{DropGNN}~\cite{papp2021dropgnn}, \textbf{rGIN}~\cite{sato2021random}, \textbf{NestedGNN}~\cite{zhang2021nested}, \textbf{LAGNN}~\cite{liu2022local}, and \textbf{GSN}~\cite{bouritsas2022improving}. 
\end{itemize}

In terms of performance evaluation, we employ accuracy and Area Under Precision-Recall Curve (AUPRC) as the evaluation metrics for graph classification while Area Under Receiver Operating Characteristic Curve (AUC) for anomaly detection. Higher accuracy/AUPRC/AUC indicates better performance. We report the mean results and standard deviation based on 10-fold cross-validation for all datasets. 

\subsection{Implementation Details}
All experiments are executed on NVIDIA Quadro RTX 6000 GPU
with an Intel Xeon E-2288G 3.7GHz CPU, and all models are implemented with Python 3.8\footnote{https://www.python.org/}. 
The following parameters are set by default for CoS-GCN and its competing methods, including WL, PK, GAT, SAGE and GCN, on all 12 datasets: the learning rate is 0.001, the batch size is set to 512, the number of network layers is 3, the hidden layer dimension of network is 256, the classifier is a 3-layer MLP, pooling operation is max pooling, and the number of epochs is 1,000. The iteration number of WL is 3. For GIN and CoS-GIN, the learning rate is chosen from $\{0.01, 0.001, 0.0005, 0.0001\}$, the batch size is selected from $\{32, 64, 128, 256\}$, hidden layer dimension is ranged in $\{16, 64, 128, 256\}$ and the readout operation is either meanpooling or maxpooling. For other baselines, we run their public codes with their recommended settings. 

\begin{landscape}
\begin{table}[t]
\caption{Accuracy (mean$\pm$std) of CoS-GNN and SOTA competing methods for graph classification on 12 real-world datasets. The best and second performance per dataset is boldfaced and underlined respectively. 
The following acronyms, PROTEINS$\_$full (PROTS$\_$full), IMDB-BINARY (I-BINARY), IMDB-MULTI (I-MULTI), REDDIT-BINARY (R-BINARY) and REDDIT-MULTI-5K (R-MULTI), are used. `Rank' indicates the average performance ranking of a model across all datasets: a smaller rank value indicates a better overall performance.  }
\label{Accuracy}
\setlength{\tabcolsep}{0.3mm}
\scalebox{0.7}{
\begin{tabular}{lcccccccccccccc}
\hline\hline
\textbf{Method} & \textbf{BZR} & \textbf{COX2} &\textbf{DD}  & \textbf{I-BINARY} & \textbf{I-MULTI} & \textbf{MUTAG} & \textbf{NCI1} &  \textbf{NCI109}  & \textbf{PROTS$\_$full}& \textbf{R-BINARY}& \textbf{R-MULTI}& \textbf{ENZYMES}&\textbf{Rank}&\textbf{p-value}\\
\hline
\textbf{WL} & 0.790$\pm$0.045 & 0.760$\pm$0.043 &0.747$\pm$0.023 & 0.727$\pm$0.035 &0.503$\pm$0.021& 0.797$\pm$0.072 &0.643$\pm$0.025 &0.647$\pm$0.021  & 0.737$\pm$0.028	& 0.611$\pm$0.039	&0.382$\pm$0.019 & 0.338$\pm$0.074 & 13.5& 0.0005\\
\textbf{PK} & 0.818$\pm$0.026& 0.790$\pm$0.015 &0.742$\pm$0.026&   0.729$\pm$0.050& 0.488$\pm$0.038 & 0.697$\pm$0.054&0.634$\pm$0.017&0.620$\pm$0.036 &  0.734$\pm$0.026& 0.635$\pm$0.049& 0.411$\pm$0.024& 0.392$\pm$0.036&   14.3&0.0005 \\
\hline
\textbf{GCN} & 0.840$\pm$0.028	&0.811$\pm$0.043 & 0.756$\pm$0.033		& 0.724$\pm$0.036 &0.498$\pm$0.024 & 0.744$\pm$0.102 &0.785$\pm$0.016 & 0.769$\pm$0.026 &   0.749$\pm$0.028	&  0.900$\pm$0.024	&  0.533$\pm$0.013	& 0.450$\pm$0.068	&  8.8 &0.0005\\
\textbf{GAT} & 0.837$\pm$0.036	& 0.790$\pm$0.065 &0.773$\pm$0.027	& 0.704$\pm$0.044 &0.496$\pm$0.035 &0.738$\pm$0.105 & 0.772$\pm$0.015 & 0.765$\pm$0.027 &  0.748$\pm$0.035	& 0.851$\pm$0.036	&  0.502$\pm$0.025	&0.422$\pm$0.069	& 10.8&0.0010 \\
\textbf{SAGE} & 0.842$\pm$0.055	& 0.820$\pm$0.055 &  0.771$\pm$0.031	 &  0.733$\pm$0.046 & 0.495$\pm$0.024 &0.766$\pm$0.059 & 0.795$\pm$0.018 & 0.780$\pm$0.027  &0.748$\pm$0.034	& 0.878$\pm$0.029	& 0.508$\pm$0.017	&  0.395$\pm$0.079	&  8.3&0.0005 \\
\textbf{GIN} & 0.835$\pm$0.033	&0.805$\pm$0.039 & 0.750$\pm$0.048	 & 0.731$\pm$0.047 & 0.502$\pm$0.031 &0.866$\pm$0.078 &0.790$\pm$0.023 &0.795$\pm$0.012	 &0.738$\pm$0.039	& 0.891$\pm$0.016	& \underline{0.557}$\pm$0.021	& 0.550$\pm$0.091	& 7.3&  0.0005\\
\hline
\textbf{$\mathcal{G}-$mixup} &0.832$\pm$0.042	&0.805$\pm$0.043 &$-$ 	 & 0.719$\pm$0.030 & 0.505$\pm$0.015  &0.824$\pm$0.093 &0.778$\pm$0.023  & 0.752$\pm$0.039& 0.705$\pm$0.054 & \textbf{0.929}$\pm$0.009	&0.555$\pm$0.005 &   0.487$\pm$0.054	&8.9 & 0.0049\\
\textbf{Dummy} & 0.828$\pm$0.069	 	& 0.805$\pm$0.048	      & 0.777$\pm$0.035	 	&  $-$  &$-$ & 0.829$\pm$0.105 & 0.752$\pm$0.015 & 0.738$\pm$0.025&  \textbf{0.770}$\pm$0.029	& $-$	&  $-$	&  0.500$\pm$0.101	&  8.4& 0.0034 \\
\textbf{DropGNN} &\underline{0.845}$\pm$0.030 & 0.807$\pm$0.055 & $-$  &  0.741$\pm$0.027&0.502$\pm$0.023& 0.834$\pm$0.095& 0.809$\pm$0.014 & \underline{0.810}$\pm$0.013  & 0.747$\pm$0.032 &  $-$& $-$ & 0.588$\pm$0.053 & 5.7&  0.0005\\
\textbf{rGIN} & 0.827$\pm$0.046 & 0.805$\pm$0.051& 0.698$\pm$0.020 &  0.746$\pm$0.022&\underline{0.512}$\pm$0.026&0.851$\pm$0.052&0.808$\pm$0.019&  0.801$\pm$0.016&  0.757$\pm$0.024& 0.897$\pm$0.020& 0.549$\pm$0.023&   \underline{0.663}$\pm$0.073 & 6.0&0.0098\\
\textbf{NestedGCN} & 0.808$\pm$0.033& 0.782$\pm$0.009&0.763$\pm$0.038& 0.733$\pm$0.052&0.499$\pm$0.037&0.829$\pm$0.011&0.720$\pm$0.080&0.708$\pm$0.076 &0.730$\pm$0.017&  $-$& $-$ &  0.312$\pm$0.067 & 12.8 &0.0005 \\
\textbf{NestedGIN} & 0.832$\pm$0.082& 0.790$\pm$0.066& \textbf{0.778}$\pm$0.039& 0.745$\pm$0.064& 0.510$\pm$0.023& \underline{0.879}$\pm$0.082& 0.777$\pm$0.017& 0.779$\pm$0.026& 0.738$\pm$0.035& $-$& $-$& 0.290$\pm$0.080& 7.8 & 0.0015\\
\textbf{GSN-v} & 0.825$\pm$0.036& 0.801$\pm$0.040 &0.702$\pm$0.033&   \textbf{0.760}$\pm$0.047&  0.509$\pm$0.031& 0.861$\pm$0.097& 0.808$\pm$0.018& 0.804$\pm$0.020&   0.736$\pm$0.036& 0.819$\pm$0.028&0.521$\pm$0.026& \textbf{0.687}$\pm$0.043 & 7.6& 0.0190\\
\textbf{LAGCN} &0.837$\pm$0.039& 0.807$\pm$0.064& 0.752$\pm$0.033&     0.727$\pm$0.041&	0.499$\pm$0.022&0.761$\pm$0.087& 0.752$\pm$0.027& 0.713$\pm$0.031& 0.755$\pm$0.035&0.884$\pm$0.016& 0.513$\pm$0.026&0.590$\pm$0.047 & 9.3 &0.0010\\
\textbf{LAGIN} & 0.820$\pm$0.035 &  \textbf{0.831}$\pm$0.028&   0.766$\pm$0.037& 0.731$\pm$0.055& 0.491$\pm$0.032& 0.872$\pm$0.049&0.763$\pm$0.016 & 0.740$\pm$0.032& \underline{0.760}$\pm$0.032& 	0.894$\pm$0.020	&  	0.553$\pm$0.022 & 0.645$\pm$0.042  & 7.6	&0.0156\\ 
\hline
\textbf{CoS-GCN} & 0.832$\pm$0.036	&\underline{0.824}$\pm$0.055&\textbf{0.778}$\pm$0.032	 &\underline{0.754}$\pm$0.052 & 0.503$\pm$0.019 &  0.878$\pm$0.059 &\underline{0.816}$\pm$0.013 & 0.801$\pm$0.021 & 0.757$\pm$0.028	&  \underline{0.917}$\pm$0.022	& 0.554$\pm$0.028 & 0.533$\pm$0.064	& \underline{3.8}& $-$ \\
\textbf{CoS-GIN} & \textbf{0.850}$\pm$0.043 & 0.822$\pm$0.094&   0.773$\pm$0.038 & 0.753$\pm$0.033 & \textbf{0.517}$\pm$0.023& \textbf{0.883}$\pm$0.066&\textbf{0.823}$\pm$0.026 & \textbf{0.812}$\pm$0.013&  0.751$\pm$0.019 &  0.906$\pm$0.018& \textbf{0.559}$\pm$0.021& 0.653$\pm$0.036&\textbf{2.3} & $-$	\\   
\hline\hline
\end{tabular}
}
\end{table}
\end{landscape}

\begin{landscape}
\begin{table}[t]
\centering
\caption{AUPRC (mean$\pm$std) of CoS-GNN and SOTA competing methods for graph classification on 9 real-world binary classification datasets. The best and second performance per dataset is boldfaced and underlined respectively. 
The following acronyms, PROTEINS$\_$full (PROTS$\_$full), IMDB-BINARY (I-BINARY), IMDB-MULTI (I-MULTI), REDDIT-BINARY (R-BINARY) and REDDIT-MULTI-5K (R-MULTI), are used. `Rank' indicates the average performance ranking of a model across all datasets: a smaller rank value indicates a better overall performance.}
\label{AUPRC}
\setlength{\tabcolsep}{0.3mm}
\scalebox{0.7}{
\begin{tabular}{lccccccccccc}
\hline\hline
\textbf{Method} & \textbf{BZR} & \textbf{COX2} &\textbf{DD}  & \textbf{I-BINARY} & \textbf{MUTAG} & \textbf{NCI1} &  \textbf{NCI109}  & \textbf{PROTS$\_$full}& \textbf{R-BINARY}& \textbf{Rank}\\
\hline
\textbf{GCN}& 0.674$\pm$0.124&0.541$\pm$0.152&0.760$\pm$0.055&0.831$\pm$0.038&	0.783$\pm$0.113&0.842$\pm$0.023&0.824$\pm$0.0198&\underline{0.763}$\pm$0.038&0.956$\pm$0.011& 7.3\\
\textbf{GAT}&0.624$\pm$0.156&0.520$\pm$0.137&0.753$\pm$0.058&0.814$\pm$0.042&	0.870$\pm$0.089&0.841$\pm$0.020&0.811$\pm$0.014&\textbf{0.772}$\pm$0.038&0.937$\pm$0.022 & 9.6\\
\textbf{SAGE}&0.661$\pm$0.170&0.565$\pm$0.144&\textbf{0.803}$\pm$0.038&0.813$\pm$0.042&0.899$\pm$0.0569&0.852$\pm$0.029&0.823$\pm$0.0176&0.760$\pm$0.041&0.944$\pm$0.015 &6.8\\
\textbf{GIN}&0.619$\pm$0.123&0.556$\pm$0.115&0.770$\pm$0.037&0.828$\pm$0.032	&0.963$\pm$0.026&0.857$\pm$0.028&0.848$\pm$0.021&0.699$\pm$0.058&0.940$\pm$0.034&7.3\\
\hline
\textbf{$\mathcal{G}-$mixup}&0.691$\pm$0.130&0.439$\pm$0.160& $-$ &	0.829$\pm$0.033&0.945$\pm$0.040&0.833$\pm$0.021&0.796$\pm$0.043&0.688$\pm$0.078& $-$  &10.6\\	
\textbf{Dummy}&0.570$\pm$0.067&0.554$\pm$0.175&\underline{0.800}$\pm$0.040&$-$&	0.915$\pm$0.074&0.808$\pm$0.031&0.786$\pm$0.027&\textbf{0.772}$\pm$0.040&$-$	&9.6\\
\textbf{DropGNN}&0.675$\pm$0.085&0.555$\pm$0.172&0.690$\pm$0.044&\underline{0.837}$\pm$0.024&0.962$\pm$0.027&\textbf{0.879}$\pm$0.020&\underline{0.863}$\pm$0.021&0.708$\pm$0.051&0.938$\pm$0.029&6.1 \\
\textbf{rGIN}&0.576$\pm$0.096&\underline{0.573}$\pm$0.147&0.704$\pm$0.026&0.821$\pm$0.022	&0.968$\pm$0.025&0.868$\pm$0.020&0.862$\pm$0.010&0.706$\pm$0.068&0.928$\pm$0.039 & 7.7\\
\textbf{NestedGIN}&0.587$\pm$0.233&0.563$\pm$0.180&0.784$\pm$0.049&\textbf{0.840}$\pm$0.039	&0.962$\pm$0.029&0.851$\pm$0.020&0.842$\pm$0.025&0.745$\pm$0.050&$-$&	6.9\\
\textbf{GSN-v}&\underline{0.694}$\pm$0.143&0.528$\pm$0.150&0.699$\pm$0.050&0.828$\pm$0.073	&0.969$\pm$0.031	&0.857$\pm$0.0215&0.847$\pm$0.021&0.670$\pm$0.132&0.876$\pm$0.037& 8.2 \\
\textbf{LAGCN}&0.652$\pm$0.077&0.560$\pm$0.113&0.757$\pm$0.045	&0.826$\pm$0.038	&0.912$\pm$0.034	&0.800$\pm$0.020	&0.755$\pm$0.026	&0.747$\pm$0.040	&\underline{0.961}$\pm$0.025& 8.8 \\
\textbf{LAGIN}&0.552$\pm$0.089&	0.539$\pm$0.130&	0.785$\pm$0.066&	0.833$\pm$0.025	&	\textbf{0.979}$\pm$0.012&	0.838$\pm$0.024&	0.811$\pm$0.044&	0.723$\pm$0.067&0.955$\pm$0.020 & 7.7\\
\hline
\textbf{CoS-GCN}&	0.594$\pm$0.102	&\textbf{0.575}$\pm$0.133	&0.795$\pm$0.052&	\underline{0.837}$\pm$0.046	&	0.974$\pm$0.020&	\textbf{0.879}$\pm$0.014	&0.861$\pm$0.027	&0.760$\pm$0.039	&0.960$\pm$0.022& \textbf{3.4}\\
\textbf{CoS-GIN}&	\textbf{0.724}$\pm$0.147&	0.562$\pm$0.160&	0.777$\pm$0.046&	0.829$\pm$0.034	&	\underline{0.977}$\pm$0.019&	\underline{0.872}$\pm$0.015&	\textbf{0.880}$\pm$0.021&	0.725$\pm$0.035&	\textbf{0.963}$\pm$0.011& \underline{3.7}\\

\hline\hline
\end{tabular}
}
\end{table}
\end{landscape}

\subsection{Enabling Graph Classification}\label{subsec:sota}
The graph classification accuracy results of CoS-GNN models (including CoS-GCN and CoS-GIN) and 12 SOTA competing methods are reported in Table~\ref{Accuracy}, where the GNN backbone used in $\mathcal{G}$-mixup, Dummy and DropGNN is all GIN due to its better performance; the results of $\mathcal{G}$-mixup on the IMDB and REDDIT datasets are taken from \cite{han2022g}; the result of Dummy on DD, NCI1 and NCI109 are from \cite{liu2022boosting}; the results of NestedGCN and NestedGIN on DD, MUTAG and ENZYMES are from \cite{zhang2021nested}; and `-' means the results are not reported in the original papers. 

It is clear that CoS-GIN and CoS-GCN achieve the best or second-best performance on most of the datasets and the two top-ranked methods among all methods. Specifically, CoS-GCN improves GCN by $0.8\%$, $2.2\%$, $3.0\%$, $3.1\%$, $3.2\%$ and $8.3\%$ for PROTEINS$\_$full, DD, IMDB-BINARY, NCI1, NCI109 and ENZYMES respectively, while the improvements brought by CoS-GIN over GIN are $1.5\%$, $1.7\%$, $1.7\%$, $2.2\%$, $2.3\%$, $3.3\%$ and $10.3\%$ for BZR, COX2, IMDB-BINARY, DD, NCI1, NCI109 and ENZYMES respectively. These large performance advancement reveals that the structural information in these dataset is specific and the feature augmentation and message passing process in our CoS-GNN makes full use of these structural information to improve its performance. When compared with other augmentation methods, our models can also perform better than the SOTA models on most datasets (\ie, NCI109 ($0.2\%$), REDDIT-MULTI ($0.2\%$), MUTAG ($0.4\%$), BZR ($0.5\%$), IMDB-MULTI ($0.5\%$) and NCI1 ($1.4\%$)) and ranks top among all the competitors on overall performance. We also perform a paired Wilcoxon signed rank test to examine the significance of CoS-GNN against each of the competing methods across the 12 datasets.
As shown by the p-values in Table~\ref{Accuracy}, our CoS-GIN significantly outperforms GSN-v and LAGIN at the $95\%$ confidence level and exceeds other competitors at the $99\%$ confidence level. These results indicate that our collective node and graph structural knowledge augmented GNNs can learn more important graph structure information for graph classification.  Besides, on individual datasets, CoS-GNN can gain 2\%-11\% accuracy improvement maximally on specific datasets 
when compared to the best-performing competing methods NestedGNN, GSN-v and LAGNN (for example, 5\% enhancement of NestedGIN on NCI1, 9\% improvement of GSN on REDDIT-BINARY). 
This means that the domain-adaptive graph structural knowledge in CoS-GNN can provide more generalized information to improve the model performance across different datasets while NestedGNN, GSN-v and LAGNN only consider the local structural information, which limits their performance. In summary, compared to each SOTA method, CoS-GNN may only have limited improvements on a few individual datasets, but the improvement on a set of datasets is substantial, and its improvement is significant across the 12 datasets used.

We report the AUPRC results of CoS-GNN and the competing methods on binary classification tasks in Table \ref{AUPRC}. Considering the limited performance of WL, PK and LAGCN, we omit their results. As can be seen in Table \ref{AUPRC}, although our CoS-GNN is not always the best model on every dataset, our CoS-GIN and CoS-GCN still achieve the top two performance on overall datasets, which further demonstrates the excellent ability of our CoS-GNN. 

We also compare our CoS-GNN with vanilla GNNs on Open Graph Benchmark (OGB) dataset--ogbg-molhiv and ogbg-molpcba in Table \ref{ogb}. Our CoS-GNN achieves better performance than corresponding vanilla GNN in most situations, indicating the positive contribution of the augmented features. The performance of CoS-GIN is a bit worse than that of GIN on ogbg-molpcba, which might be because that although our augmented features are useful, which is demonstrated by the improvement of CoS-GCN compared with GCN, GIN has also learned enough useful structural information and the augmentation operation in our CoS-GIN does not provide extra discriminative information.

\begin{table}[t]
\centering
\caption{Results (mean$\pm$std) of CoS-GNN and corresponding vanilla GNN on OGB datasets -- ogbg-molhiv and ogbg-molpcba. The best performance per dataset is boldfaced.} 
\label{ogb}
\setlength{\tabcolsep}{0.3mm}
\scalebox{0.7}{
\begin{tabular}{cccccccc}
\hline\hline
 & ogbg-molhiv & ogbg-molpcba \\
\textbf{Model} & \textbf{AUROC} & \textbf{AP} \\
\hline
\textbf{GCN} & 0.7626$\pm$0.0098   &  0.1753$\pm$0.0023  \\
\textbf{CoS-GCN} & $\mathbf{0.7662\pm0.0165}$  & $\mathbf{0.2045\pm0.0034}$ \\
\hline
\textbf{GIN} & 0.7825$\pm$0.0077   &   $\mathbf{0.2288\pm0.0027}$  \\
\textbf{CoS-GIN} & $\mathbf{0.7912\pm0.0068}$  &  0.2249$\pm$0.0034 \\
\hline\hline
\end{tabular}
}
\end{table}

We also calculate the training and inference time of our CoS-GNN and its competitors to demonstrate the efficiency of the CoS-GNN. We use the same GIN structure in all models. The results are reported in Table~\ref{testtime}. We can see that our CoS-GIN is a little more costly than simple augmentation operation with conventional GIN module, which is caused by the feature augmentation operation, but it is more efficient than complex structural augmentation methods, including DropGNN and NestedGIN. Besides, although our CoS-GIN is a little time-costly on some large-scale datasets, it can still be successfully implemented on devices with limited computational ability, while $\mathcal{G}$-mixup, dummy, DropGNN and NestedGIN require more powerful devices on such instances, which restricts their application.



\begin{landscape}
\begin{table}
\caption{Training and inference time of augmentation methods in graph classification task. All methods are with GIN as backbone. Each result is the time on the whole dataset.}
\centering
\label{testtime}
\setlength{\tabcolsep}{0.6mm}
\scalebox{0.8}{
\begin{tabular}{l|l|cccccccccccc}
\hline\hline 
\textbf{Stage}&\textbf{Method} & \textbf{BZR} & \textbf{COX2} &\textbf{DD} & \textbf{I-BINARY} & \textbf{I-MULTI} & \textbf{MUTAG} & \textbf{NCI1} &  \textbf{NCI109} &  \textbf{PROTS$\_$full}& \textbf{R-BINARY}& \textbf{R-MULTI}& \textbf{ENZYMES}\\
\hline
\multirow{8}{*}{\textbf{Training}}&$\mathcal{G}$-mixup & 0.0281&0.0328& -&0.0548&0.0679&0.0107&0.2547&0.2027& 0.0684& -& -&0.0369\\
&Dummy&0.0186&0.0218&0.2356& -& -&0.0095&0.1556&0.1527&0.0468& -& -&0.0230 \\
&DropGNN &0.3380&0.2419& -&0.2009&0.4198&0.0543&2.2575&2.3114&1.1480	& -& -&0.4901 \\
&rGIN&0.0482&0.0558&0.3594&0.1035&0.1231&0.0213&0.4054&0.3735&0.1172&0.8252&2.1535&0.0590 \\
&NestedGIN&0.1861&0.2056&9.3848&0.8087&0.6975&0.0545&1.3730&1.4143	&0.6841	& -& -&0.2964\\
&GSN-v& 1.7383&1.9700&6.1113&4.6320&3.3124&1.1959&1.3851&1.3868&1.3944&39.6010&50.8742&1.3267 \\													
&LAGIN&0.0492&0.0550&0.3569&0.1009&0.1167&0.0215&0.3647&0.3670&0.1167&0.7567&2.0873&0.0596 \\
&CoS-GIN&0.0870&0.0991&0.7040&0.1719&0.1978&0.0358&0.5852&0.5855&0.1994&1.5373& 4.3893&0.1020 \\
\hline\hline 
\textbf{Stage}&\textbf{Method} & \textbf{BZR} & \textbf{COX2} &\textbf{DD} & \textbf{I-BINARY} & \textbf{I-MULTI} & \textbf{MUTAG} & \textbf{NCI1} &  \textbf{NCI109} &  \textbf{PROTS$\_$full}& \textbf{R-BINARY}& \textbf{R-MULTI}& \textbf{ENZYMES}\\
\hline
\multirow{8}{*}{\textbf{Inference}}&$\mathcal{G}$-mixup & 0.0029 &	0.0029&	- &	0.0042&	0.0054&	0.0023&	0.0122&	0.0125&	0.0049&	- &- & 0.0033 \\
&Dummy & 0.0029&	0.0029	&0.0186	& -& -&	0.0022&	0.0117&	0.0120&	0.0050& -& -&0.0034\\
&DropGNN &0.0188&0.0132&	- &0.0116&0.0184&0.0061&0.1079&0.1161&0.0533& -& -& 0.0247\\
&rGIN&0.0053&0.0054&0.0218&0.0079&0.0099&0.0049&0.0227&0.0216&0.0086&0.0376&0.1185&0.0056\\
&NestedGIN&0.0147&0.0148&0.5884&0.0632&0.0529&0.0056&0.0909&0.0936&0.0404& -& -&0.0207	\\
&GSN-v&0.0088&0.0095&0.0306&0.0160&0.0153&0.0082&0.0246&0.0250&0.0095&0.1121&0.3701&0.0081\\													
&LAGIN&0.0032&0.0035&0.0168&0.0046&0.0045&0.0035&0.0046&0.0046&0.0051&0.0286	&0.0478&0.0037 \\
&CoS-GIN&0.0092&0.0094&0.0410&0.0116&0.0132&0.0079&0.0321&0.0324&0.0129&0.0807 &0.2334&0.0101\\
\hline\hline
\end{tabular}}
\end{table}
\end{landscape}


\subsection{Employ CoS-GNN as GNN Backbone}
\subsubsection{Combined with GNN-based Methods}
In this section, we examine the applicability of our CoS-GNN as GNN backbone in other GNN-based methods by replacing the GCN of $\mathcal{G}$-mixup and Dummy with CoS-GCN. We omit the results on REDDIT-BINARY and REDDIT-MULTI here because we can not run $\mathcal{G}$-mixup and Dummy on them by our device. The accuracy results of GCN-based and CoS-GCN-based $\mathcal{G}$-mixup and Dummy are reported in Table~\ref{CoS-GCNbasedmixup}. The results show that the CoS-GCN-based $\mathcal{G}$-mixup outperform GCN-based $\mathcal{G}$-mixup on all datasets and the largest improvement can be up to $32\%$. The performance of CoS-GCN-based Dummy method is also better than the GCN-based Dummy on most datasets, achieving up to $25\%$ improvement. The paired signed-rank test indicates that the improvement of CoS-GCN-based $\mathcal{G}$-mixup and Dummy across 10 datasets is significant at $99\%$ and $90\%$ confidence level, respectively. The decrease in accuracy of CoS-GCN-based Dummy on DD and PROTEINS$\_$full might be because that the specific structural statistics augmented on the graphs are influenced and disturbed by the addition of the dummy node. When compared with the results of single CoS-GCN, there are also some improvement brought by CoS-GCN-based $\mathcal{G}$-mixup on BZR ($3.5\%$), MUTAG ($0.5\%$) and NCI1 ($0.5\%$) and by CoS-GCN-based Dummy on ENZYMES ($7.4\%$), which means that the combination with other GNN-based methods can also enhance the power of CoS-GNN. Overall, our CoS-GNN and other GNN-based methods are complementary and can be used as the basic GNN module in GNN-based methods to improve their performance successfully. 

\begin{table}
\caption{Accuracy (mean$\pm$std) results of $\mathcal{G}$-mixup and Dummy using CoS-GCN as the GNN module, with $\mathcal{G}$-mixup and Dummy with GCN as baselines in graph classification. `Different' denotes accuracy improvement ($\uparrow$) or decrease ($\downarrow$) brought by the replacement of CoS-GCN. Both of two methods suffer out of memory on REDDIT-BINARY and REDDIT-MULTI.}
\label{CoS-GCNbasedmixup}
\centering
\setlength{\tabcolsep}{0.6mm}
\scalebox{0.8}{
\begin{tabular}{l|ccc}
\hline\hline
\textbf{Dataset} & \textbf{$\mathcal{G}-$mixup(GCN)}& \textbf{$\mathcal{G}-$mixup(CoS-GCN)}& \textbf{Difference}\\
\hline
BZR & $0.8295\pm0.0188$ & $0.8666\pm0.0471$  & $0.0371\uparrow$ \\
COX2 & $0.7730\pm0.0482$ & $0.7967\pm0.0416$ &$0.0237\uparrow$ \\
DD &  $-$ &$-$ & $-$ \\
I$-$BINARY & $0.7360\pm0.0350$ & $0.7410\pm0.0243$ &$0.0050\uparrow$ \\
I$-$MULTI & $0.5013\pm0.0283$ & $0.5073\pm0.0264$ & $0.0060\uparrow$ \\
MUTAG & $0.7173\pm0.1130$ & $0.8830\pm0.0617$ & $0.1657\uparrow$ \\
NCI1 & $0.5007\pm0.0010$ & $0.8212\pm0.0146$ & $0.3205\uparrow$ \\
NCI109 & $0.5038\pm0.0007$ & $0.7986\pm0.0179$ & $0.2948\uparrow$\\
PROTS$\_$full & $0.7152\pm0.0350$ & $0.7512\pm0.0246$ & $0.0360\uparrow$ \\
ENZYMES & $0.3456\pm0.0435$ & $0.4909\pm0.0490$ & $0.1453\uparrow$\\
\hline
\textbf{p-value} & 0.0039 & - & - \\
\hline\hline
\textbf{Dataset} & \textbf{Dummy(GCN)}&\textbf{Dummy(CoS-GCN)} & \textbf{Difference} \\
\hline
BZR  & $0.8296\pm0.0203$ & $0.8321\pm0.0482$ & $0.0025\uparrow$\\
COX2 & $0.7899\pm0.0509$ & $0.8112\pm0.0654$ & $0.0213\uparrow$\\
DD & $0.7776\pm0.0717$ &  $0.7699\pm0.0433$ & $-0.0077\downarrow$\\
I$-$BINARY  & $-$ & $-$ & $-$  \\
I$-$MULTI & $-$ & $-$ & $-$\\
MUTAG &  $0.7813\pm0.1292$ & $0.8673\pm0.0754$ & $0.0860\uparrow$\\
NCI1  &  $0.6608\pm0.1016$ & $0.8092\pm0.0146$ & $0.1484\uparrow$ \\
NCI109 &$0.5527\pm0.0851$ & $0.8013\pm0.0243$& $0.2486\uparrow$\\
PROTS$\_$full & $0.7557\pm0.0375$ & $0.7556\pm0.0286$& $-0.0001\downarrow$	\\
ENZYMES & $0.4450\pm0.1038$ & $0.6067\pm0.0602$& $0.1617\uparrow$\\
\hline
\textbf{p-value} & 0.0547 & - & - \\
\hline\hline
\end{tabular}
}
\end{table}

\subsubsection{Combined with Other Pooling Methods}
There have been a type of pooling methods that hierarchically extract the graph information~\cite{ying2018hierarchical,zhang2021hierarchical}. We demonstrate that our CoS-GNN structure can be combined with these methods in this section. In each pooling layer, we use the real node features to calculate the pooling criterion and construct the coarsened graph. 

We run our CoS-GCN with a hierarchical pooling--MVPool as the pooling operation to prove that our CoS-GCN can improve the performance of hierarchical pooling. The results of CoS-GCN with MVPool and GCN with MVPool as baseline are reported in Table~\ref{MVPool}, showing that CoS-GCN still can bring improvement on all datasets except COX2 and IMDB-MULTI. The average improvement is $2.02\%$ and the maximal improvement can be up to about $12.28\%$. The paired signed-rank test indicates that the improvement across 12 datasets is significant at $99\%$ confidence level. These results demonstrate that the augmented node and graph structural information also can provide extra useful information while the pooling operation is learning graph structural information. The accuracy decline of CoS-GCN with MVPool on COX2 is very marginal, only $0.01\%$. The $1\%$ drop of CoS-GCN with MVPool on IMDB-MULTI might be because that the structural information in IMDB-MULTI might be limited and the hierarchical learning of MVPool can utilize most structural information in the graph. The feature augmentation in CoS-GCN provides some redundant information to the CoS-GCN-MVPool.


\begin{table}
\caption{Accuracy (mean$\pm$std) results of GCN and CoS-GCN with MVPool as the readout operation. `Different' denotes accuracy improvement ($\uparrow$) or decrease ($\downarrow$) brought by CoS-GCN compared to GCN.}
\centering
\label{MVPool}
\setlength{\tabcolsep}{0.6mm}
\scalebox{0.8}{
\begin{tabular}{l|ccc}
\hline\hline
\textbf{Dataset} & \textbf{GCN-MVPool}& \textbf{CoS-GCN-MVPool} &\textbf{Difference}\\
\hline
BZR & $0.8273\pm0.0565$ & $0.8345\pm0.0301$ &$0.0072\uparrow$ \\
COX2 & $0.7987\pm0.0351$ & $0.7986\pm0.0430$ &$-0.0001\downarrow$  \\
DD & $0.7750\pm0.0363$ & $0.7962\pm0.0390$ & $0.0212\uparrow$  \\
I$-$BINARY & $0.7280\pm0.0268$ & $0.7350\pm0.0492$ & $0.0070\uparrow$ \\
I$-$MULTI & $0.5180\pm0.0253$ & $0.5080\pm0.0332$ & $-0.0100\downarrow$  \\
MUTAG & $0.7178\pm0.0858$ & $0.8406\pm0.1107$ & $0.1228\uparrow$ \\
NCI1 & $0.7791\pm0.0155$ & $0.8015\pm0.0094$  & $0.0224\uparrow$  \\
NCI109 & $0.7754\pm0.0233$ &  $0.7989\pm0.0198$& $0.0235\uparrow$ \\
PROTS$\_$full & $0.7556\pm0.0360$ &  $0.7664\pm0.0298$& $0.0108\uparrow$ 			\\
R$-$BINARY & $0.9050\pm0.0219$ & $0.9140\pm0.0202$ & $0.0090\uparrow$   \\
R$-$MULTI & $0.5311\pm0.0136$ & $0.5515\pm0.0255$  & $0.0204\uparrow$  \\
ENZYMES & $0.5833\pm0.0516$ & $0.5917\pm0.0455$ & $0.0084\uparrow$  \\
\hline
\textbf{p-value} & 0.0093 & - & - \\
\hline\hline
\end{tabular}
}
\end{table}

\subsection{Enabling Other Down-stream Tasks}
\subsubsection{Graph Anomaly Detection}
We next evaluate the performance of CoS-GNN in anomaly detection, in which the normal graph samples are available for training. It should be noted that this experiment is focused to demonstrate the ability of CoS-GNN in enabling better performance of some popular anomaly detection algorithms, compared to the use of original GNNs, rather than to argue for state-of-the-art anomaly detection performance of CoS-GNN.
Thus, we use one-class GCN (OCGCN) and GLocalKD~\cite{ma2022deep} as baselines and replace the GCN of GLocalKD with CoS-GCN to examine the ability of CoS-GCN to enable the anomaly detector GLocalKD. Since GLocalKD employs degree information as the node features for plain graphs, which is one of the features we augment in CoS-GNN, we only compare the performance of them on graphs with node attributes. The datasets we use in this experiment are all from TUDataset graph classification benchmark~\cite{Morris+2020}, with the results reported in Table~\ref{anomalydetection}, where FTEIN is short for FRANKENSTEIN.
It can be observed that CoS-GCN improves the performance of GLocalKD largely on most datasets and the largest improvement can be up to $24.6\%$, which means that the augmented node and graph structural features can also be effectively leveraged via our proposed message passing for improving the detection of anomalies, despite it is an semi-supervised task. The decrease of CoS-GCN on AIDS might be because that AIDS is a rather simple dataset on which the original GLocalKD has obtained an AUC of almost one; CoS-GCN is slightly over-parameterized for such a simple dataset. 

\begin{table}[h]
\centering
\caption{AUC results (mean$\pm$std) of OCGCN, GLocalKD based on GCN, and CoS-GCN-enbaled GLocalKD (CoS-GCN for short) on 12 public attributed graph datasets. `Diff.' denotes AUC improvement ($\uparrow$) or decrease ($\downarrow$) resulted by replacing the GCN backbone with CoS-GCN in GLocalKD.}
\label{anomalydetection}
\setlength{\tabcolsep}{0.6mm}
\scalebox{0.8}{
\begin{tabular}{l|cccc}
\hline\hline
\textbf{Dataset} & \textbf{OCGCN} & \textbf{GLocalKD} & \textbf{CoS-GCN} & \textbf{Diff.} \\
\hline
AIDS & $0.664\pm0.080$ & $0.992\pm0.004$ & $0.948\pm0.008$ & $-0.044\downarrow$\\
BZR & $0.658\pm0.071$ & $0.679\pm0.065$ & $0.804\pm0.068$ & $0.125\uparrow$\\
COX2 & $0.628\pm0.072$ & $0.589\pm0.045$ & $0.665\pm0.050$& $0.076\uparrow$\\
DHFR & $0.495\pm0.080$& $0.558\pm0.030$ & $0.595\pm0.053$& $0.037\uparrow$\\
PROTS$\_$full & $0.718\pm0.036$ & $0.785\pm0.034$ & $0.792\pm0.024$& $0.007\uparrow$\\
ENZYMES  & $0.613\pm0.087$ & $0.636\pm0.061$ & $0.760\pm0.070$& $0.124\uparrow$\\
COIL$-$RAG &$0.629\pm0.210$ & $0.656\pm0.220$ & $0.700\pm0.082$& $0.044\uparrow$\\
Letter$-$high &$0.580\pm0.042$& $0.591\pm0.023$ & $0.655\pm0.071$& $0.064\uparrow$\\
Letter$-$low & $0.616\pm0.168$ & $0.738\pm0.051$ & $0.984\pm0.005$& $0.246\uparrow$\\
Letter$-$med &$0.618\pm0.080$ & $0.662\pm0.062$ & $0.852\pm0.024$& $0.190\uparrow$ \\
FTEIN &$0.550\pm0.031$& $0.547\pm0.019$ & $0.563\pm0.018$& $0.016\uparrow$\\
Synthie &$0.568\pm0.083$& $0.844\pm0.036$ & $0.862\pm0.017$& $0.018\uparrow$\\
\hline\hline
\end{tabular}
}
\end{table}

\subsubsection{Out-of-distribution Generalization}
We evaluate the generalization ability of CoS-GNN on out-of-distribution (OOD) data in this section. This experiment is designed to compare the performance of CoS-GNN with other two message passing neural networks (\ie GCN and GIN) in OOD generalization. These GNNs can be utilized as GNN backbone in various generalization methods to obtain better performance further. 
The datasets we utilize are from GOOD\footnote{https://github.com/divelab/GOOD/}~\cite{gui2022good}. GOOD-Motif is a synthetic dataset designed for structure shifts, GOOD-HIV is a molecular dataset, and GOOD-SST2 is a natural language sentiment analysis dataset. For each dataset, the GOOD benchmark selects one or two domain features (\eg, base and size for GOOD-Motif, scaffold and size for GOOD-HIV, and length for GOOD-SST2) and then applies covariate and concept shift splits per domain to create diverse distribution shifts. Following \cite{gui2022good}, the metric we use for GOOD-HIV is AUC and classification accuracy is used for other datasets. We examine the generalization power of CoS-GNN with the baseline models taken from the GOOD benchmark \cite{gui2022good}. The GNN backbone used in the baselines is GIN. 

The results on the OOD and the in-distribution (ID) validation sets are reported in Table~\ref{generalization}. It can be seen from the results that our model CoS-GCN outperforms the basic GCN on all settings except the one on GOOD-HIV; CoS-GIN gains better performance than GOOD on all settings except GOOD-HIV. This is mainly because that the node and graph structures augmented in our CoS-GNN are more generalizable w.r.t. different shifts of base, size, or length on the three GOOD datasets, while being less generalizable to the scaffold shift, a two-dimensional structural base of a molecule.
The especially outstanding performance of CoS-GNN on GOOD-Motif also helps justify this. Each graph in GOOD-Motif is generated by connecting a base graph and a motif, and thus, the structure of base graphs and motifs is highly differentiated. Thus, the augmented structural information of each class enables the structure learning in CoS-GNN to obtain substantially improved OOD generalization performance, when compared with vanilla GNN.  

\begin{table}[h]
\caption{Results of CoS-GNN with two baselines on three OOD datasets. G-X is short for the dataset name GOOD-X.}
\label{generalization}
\centering
\setlength{\tabcolsep}{0.6mm}
\scalebox{0.75}{
\begin{tabular}{l|cccc}
\hline\hline
\multirow{2}{*}{\textbf{G-Motif}} & \multicolumn{4}{c}{\textbf{Base}} \\
\cline{2-5}
 &  \multicolumn{2}{c|}{Covariate} &  \multicolumn{2}{c}{Concept}\\
\hline
\textbf{Accuracy}& \multicolumn{1}{c|}{OOD Validation}& \multicolumn{1}{c|}{ID Validation} & \multicolumn{1}{c|}{OOD Validation}& \multicolumn{1}{c}{ID Validation} \\
\hline 
GCN & $0.321\pm0.000$ & $0.343\pm0.025$ & $0.395\pm0.014$ & $0.382\pm0.017$  \\
GOOD  &  $0.687\pm0.034$ & $0.700\pm0.019$ & $0.814\pm0.006$ & $0.809\pm0.007$ \\
CoS-GCN  & $0.868\pm0.004$ & $0.865\pm0.003$ & $\mathbf{0.934\pm0.000}$ & $\mathbf{0.932\pm0.001}$   \\
CoS-GIN & $\mathbf{0.888\pm0.020}$	&$\mathbf{0.896\pm0.007}$&	$0.931\pm0.001$&	$0.923\pm0.009$\\
\hline\hline
\multirow{2}{*}{\textbf{G-Motif}} & \multicolumn{4}{c}{\textbf{Size}} \\
\cline{2-5}
 & \multicolumn{2}{c|}{Covariate} &  \multicolumn{2}{c}{Concept}\\
\hline
\textbf{Accuracy}& \multicolumn{1}{c|}{OOD Validation}& \multicolumn{1}{c|}{ID Validation}& \multicolumn{1}{c|}{OOD Validation}& \multicolumn{1}{c}{ID Validation} \\
\hline 
GCN &  $0.346\pm0.008$ & $0.350\pm0.003$ & $0.391\pm0.0172$ & $0.385\pm0.018$ \\
GOOD  &  $0.517\pm0.023$ & $0.513\pm0.019$ & $0.708\pm0.006$ & $0.694\pm0.009$\\
CoS-GCN  & $\mathbf{0.863\pm0.043}$ & $\mathbf{0.816\pm0.077}$ & $\mathbf{0.935\pm0.000}$ & $\mathbf{0.933\pm0.002}$  \\
CoS-GIN & $0.598\pm0.070$	&$0.555\pm0.096$&	$0.918\pm0.006$&	$0.898\pm0.014$\\

\hline\hline

\multirow{2}{*}{\textbf{G-HIV}} & \multicolumn{4}{c}{\textbf{Scaffold}} \\
\cline{2-5}
 &  \multicolumn{2}{c|}{Covariate} &  \multicolumn{2}{c}{Concept}\\
\hline
\textbf{AUC}& \multicolumn{1}{c|}{OOD Validation}& \multicolumn{1}{c|}{ID Validation} & \multicolumn{1}{c|}{OOD Validation}& \multicolumn{1}{c}{ID Validation} \\
\hline 
GCN & $0.669\pm0.026$ & $0.676\pm0.016$ & $0.700\pm0.014$ & $0.607\pm0.016$ \\
GOOD  &  $\mathbf{0.696\pm0.020}$ & $0.689\pm0.021$ & $\mathbf{0.723\pm0.010}$ & $\mathbf{0.653\pm0.035}$ \\
CoS-GCN  & $0.690\pm0.017$ & $\mathbf{0.699\pm0.023}$ & $0.708\pm0.009$ & $0.605\pm0.026$ \\
CoS-GIN & $0.684\pm0.021$ & $0.663\pm0.036$ & $0.722\pm0.011$ &	$0.636\pm0.016$ \\

\hline\hline
\multirow{2}{*}{\textbf{G-HIV}} & \multicolumn{4}{c}{\textbf{Size}} \\
\cline{2-5}
&  \multicolumn{2}{c|}{Covariate} &  \multicolumn{2}{c}{Concept} \\
\hline
\textbf{AUC}& \multicolumn{1}{c|}{OOD Validation}& \multicolumn{1}{c|}{ID Validation}& \multicolumn{1}{c|}{OOD Validation}& \multicolumn{1}{c}{ID Validation}\\
\hline
GCN & $0.591\pm0.020$ & $0.580\pm0.012$ & $0.638\pm0.0110$ & $0.533\pm0.009$\\
GOOD  &  $0.600\pm0.029$ & $0.584\pm0.025$ & $0.633\pm0.025$ & $0.448\pm0.029$  \\
CoS-GCN  & $\mathbf{0.607\pm0.019}$ & $\mathbf{0.619\pm0.005}$ & $0.654\pm0.008$ & $0.547\pm0.007$\\
CoS-GIN & $0.585\pm0.029$	&$0.599\pm0.028$&	$\mathbf{0.731\pm0.006}$&	$\mathbf{0.622\pm0.016}$\\
\hline\hline

\multirow{2}{*}{\textbf{G-SST2}} & \multicolumn{4}{c}{\textbf{Length}} \\
\cline{2-5}
 &  \multicolumn{2}{c|}{Covariate} &  \multicolumn{2}{c}{Concept}\\
\cline{1-5}
\textbf{Accuracy}& \multicolumn{1}{c|}{OOD Validation}& \multicolumn{1}{c|}{ID Validation} & \multicolumn{1}{c|}{OOD Validation}& \multicolumn{1}{c}{ID Validation} \\
\cline{1-5} 
GCN & $0.825\pm0.008$ & $0.805\pm0.010$ & $0.724\pm0.012$ & $0.677\pm0.010$ \\
GOOD  &  $0.813\pm0.004$ & $0.778\pm0.011$ & $0.724\pm0.005$ & $0.673\pm0.001$\\
CoS-GCN  &  $\mathbf{0.828\pm0.010}$ & $\mathbf{0.814\pm0.014}$ & $0.730\pm0.007$ & $\mathbf{0.685\pm0.023}$\\
CoS-GIN & $0.822\pm0.012$&	$0.796\pm0.021$&	$\mathbf{0.737\pm0.012}$&	$\mathbf{0.685\pm0.013}$\\
\hline\hline
\end{tabular}
}
\end{table}

\subsection{Robustness w.r.t. Structure Contamination}
Since the data collected in real applications may be with limited/noisy structural information, the performance of our CoS-GNN, which harnesses rich structural information, might be influenced by these contaminated information. In this section, we discuss the impact of limited/noisy structural knowledge on our CoS-GNN. Specifically, we randomly remove $\{1\%, 5\%, 10\%, 15\%, 20\%\}$ edges of the data and compare their results with the results on original data. Our experiment is implemented on GCN backbone. 

The results on NCI1 is displayed in Figure~\ref{strucnoise}. It is obvious that both CoS-GCN and GCN suffer from performance decline due to the edge removal and the decline level of them is similar. Our CoS-GCN always have better performance than GCN under various structural contamination situation. This means that limited/noisy structure brings no more serious effects on the CoS-GCN. This might be because that the structures we augment are in different scales and parts of the extracted features will be infected while others will still be exact. The unaffected structure features can correct the influence of the wrong information brought by the limited/noisy graph structure.

\begin{figure}
    \centering
    \includegraphics[width=5cm]{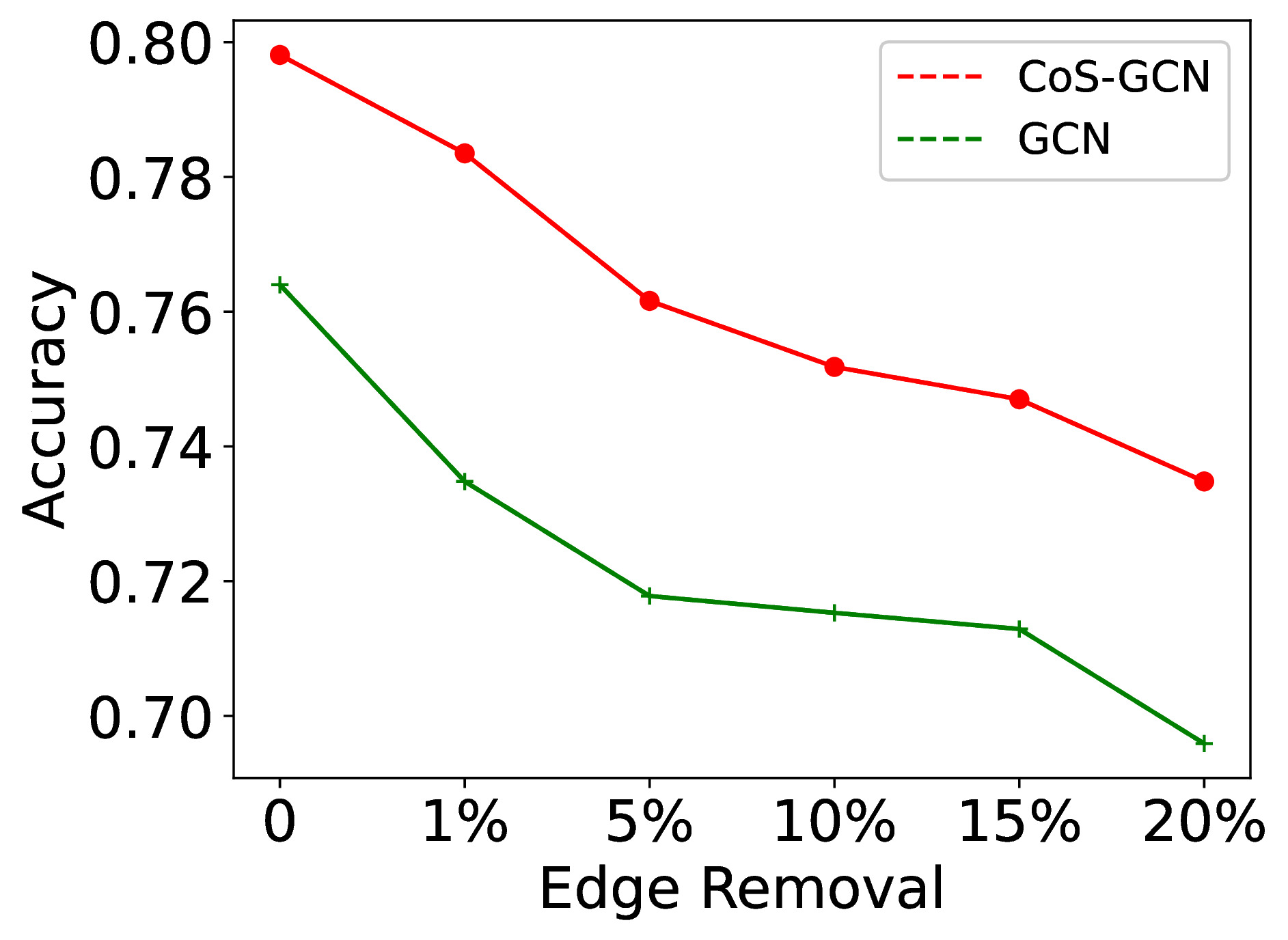}
    \caption{Accuracy performance of CoS-GCN and GCN w.r.t. different structural contamination rates.}
    \label{strucnoise}
\end{figure}

\subsection{Convergence Analysis}
In this section we run an experiment to illustrate the convergence ability of our CoS-GNN. In detail, we run the CoS-GCN on the REDDIT-BINARY dataset and record the loss tendency of training and validation dataset. The result is shown in Figure~\ref{losscurve}. It is obvious that both the training and validation loss will approach stability after a number of epochs. Besides, the early stopping used during training can ensure the model against overfitting and obtaining an excellent result. 

\begin{figure}
    \centering
    \includegraphics[width=5cm]{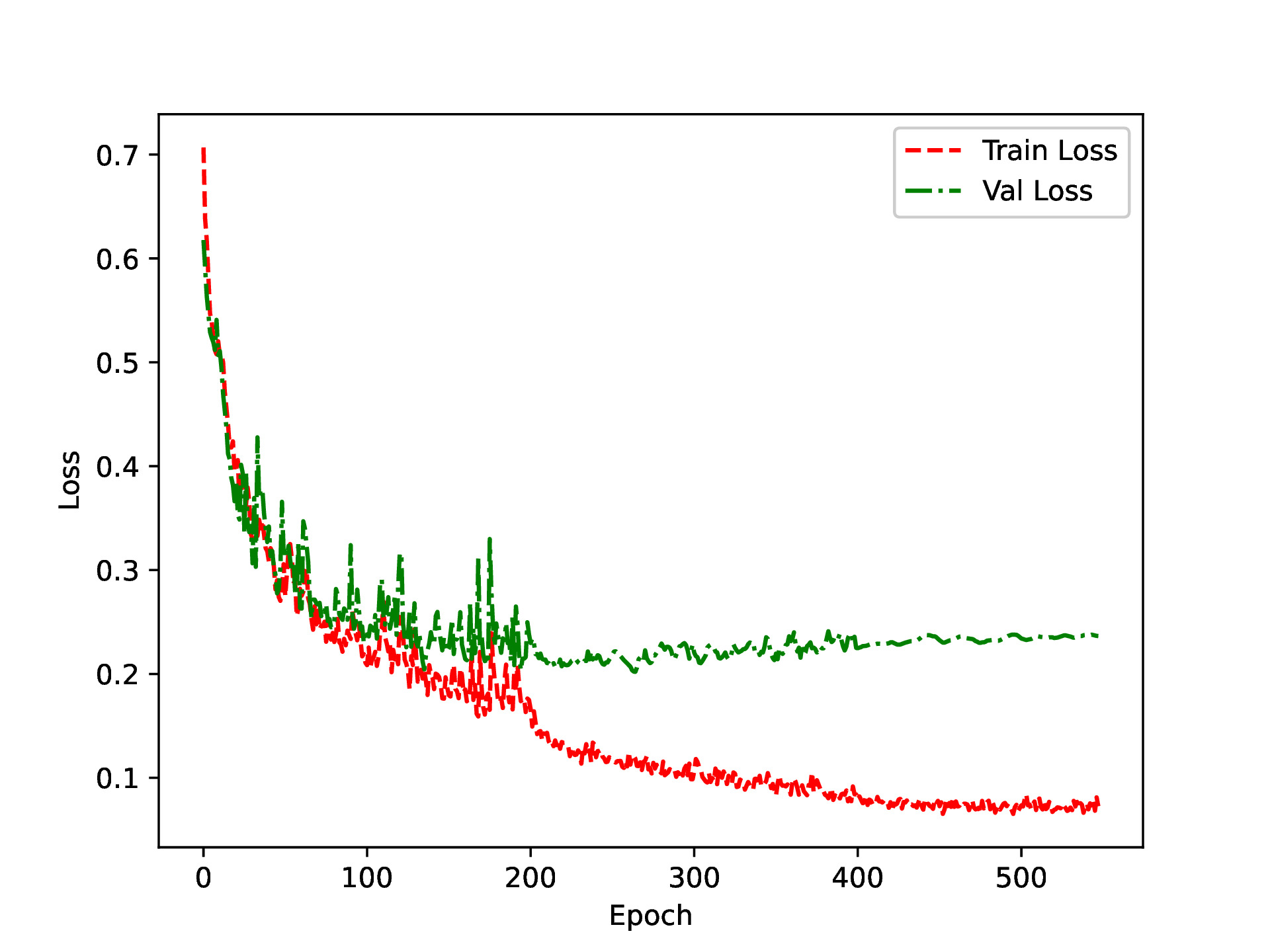}
    \caption{Loss variation tendency of CoS-GCN on the training and validation dataset of REDDIT-BINARY.}
    \label{losscurve}
\end{figure}

\subsection{Ablation Study}\label{subsec:ablation}
\subsubsection{Ablation Study of the Specific Message Passing Scheme}
This section examines the importance of the graph augmentation and the message passing scheme designed in CoS-GNN. All expeirments are based on CoS-GCN. We first evaluate the performance of GCN with original/augmented features as sole input ($\mathcal{V}_{nf}$ for real node feature, $\mathcal{V}_{ns}$ for augmented node structure features, and $\mathcal{V}_{gs}$ for augmented graph structure features), and then combine original and augmented node features by convolution after concatenation (conv$\_$cat($\mathcal{V}_{nf},\mathcal{V}_{ns}$)), concatenation after convolution (cat$\_$conv($\mathcal{V}_{nf},\mathcal{V}_{ns}$))), and convolution with our proposed message passing method (conv($\mathcal{V}_{nf},\mathcal{V}_{ns}$))). Incorporating the augmented graph features to conv($\mathcal{V}_{nf},\mathcal{V}_{ns}$) 
leads to the full CoS-GCN. 

The results of our ablation study using the graph classification task are displayed in Table~\ref{ablation}. The paired signed-rank test shows when compared with other ablation parts except conv($\mathcal{V}_{nf},\mathcal{V}_{ns}$), the improvement of CoS-GCN across 12 datasets is significant at $99\%$ confidence level. The enhancement of CoS-GCN than conv($\mathcal{V}_{nf},\mathcal{V}_{ns}$) across all datasets is significant at $85\%$ confidence level. In detail, using node features ($\mathcal{V}_{nf}$) or augmented node/graph structural features ($\mathcal{V}_{ns}$/$\mathcal{V}_{gs}$) solely can achieve good performance, and using $\mathcal{V}_{nf}$ often outperforms $\mathcal{V}_{ns}$ and $\mathcal{V}_{gs}$ on most datasets. This indicates that both the original and augmented features are useful in graph representation learning but the augmented features is limitedly informative.
The simple concatenation of $\mathcal{V}_{nf}$ and $\mathcal{V}_{ns}$, \ie, $\mathbf{conv\_cat(\mathcal{V}_{nf},\mathcal{V}_{ns})}$ or $\mathbf{cat\_conv(\mathcal{V}_{nf},\mathcal{V}_{ns}))}$, helps improve the performance over the using of them solely, indicating the complementary information gained from the graph augmentation relative to the original node features. Our proposed message passing (convolution) method on top of the real and augmented node features, \ie, $\mathbf{conv(\mathcal{V}_{nf},\mathcal{V}_{ns})}$, further enhances the results substantially, which demonstrates the effectiveness of our proposed message passing in capturing intricate relations that cannot be captured in the vanilla GCN. Lastly, incorporating the augmented graph-level features would lead to the full model CoS-GNN that largely improves $\mathbf{conv(\mathcal{V}_{nf},\mathcal{V}_{ns})}$, demonstrating that the generated global graph structural features are also important for the overall improvement. 

\begin{landscape}
\begin{table}[h]
\caption{Results of the ablation study of CoS-GCN in the graph classification task.}
\centering
\label{ablation}
\setlength{\tabcolsep}{0.6mm}
\scalebox{0.8}{
\begin{tabular}{l|ccccccc}
\hline\hline
\textbf{Dataset} & $\mathbf{\mathcal{V}_{nf}}$& $\mathbf{\mathcal{V}_{ns}}$ & $\mathbf{\mathcal{V}_{gs}}$ & $\mathbf{conv\_cat(\mathcal{V}_{nf},\mathcal{V}_{ns})}$ & $\mathbf{cat\_conv(\mathcal{V}_{nf},\mathcal{V}_{ns}))}$&$\mathbf{conv(\mathcal{V}_{nf},\mathcal{V}_{ns})}$ & \textbf{CoS-GCN}
 \\
\hline
BZR & $0.8395\pm0.0280$ & $\mathbf{0.8470\pm0.0284}$ & $0.7877\pm0.0101$ & $0.8075\pm0.0408$ & $0.8298\pm0.0248$
& $\underline{0.8445\pm0.0380}$ & $0.8321\pm0.0361$   \\
COX2 & $\underline{0.8113\pm0.0432}$ & $0.7816\pm0.0081$ & $0.7816\pm0.0081$ & $0.8050\pm0.0432$ & $0.8049\pm0.0444$&$0.8026\pm0.0575$ & $\mathbf{0.8240\pm0.0548}$    \\
DD & $0.7555\pm0.0334$ & $0.7436\pm0.0337$ & $0.7555\pm0.0375$ & $0.7699\pm0.0430$ & $\underline{0.7742\pm0.0431}$ &$0.7725\pm0.0316$  & $\mathbf{0.7784\pm0.0322}$     \\
I$-$BINARY & $0.7240\pm0.0364$ & $0.7090\pm0.0559$ & $0.7060\pm0.0518$ & $0.7250\pm0.0686$ & $\underline{0.7410\pm0.0418}
$&$0.7380\pm0.0334$ &  $\mathbf{0.7540\pm0.0516}$     \\
I$-$MULTI & $0.4980\pm0.0240$ & $0.4827\pm0.0320$ & $0.4793\pm0.0327$ & $0.4947\pm0.0275$ & $0.4913\pm0.0257
$&$\mathbf{0.5087\pm0.0237}$ & $\underline{0.5027\pm0.0189}$       \\
MUTAG & $0.7439\pm0.1021$ & $0.8673\pm0.0418$ & $0.8076\pm0.0913$ & $0.8234\pm0.0886$ & $0.8351\pm0.0642
$&$\underline{0.8719\pm0.0661}$ & $\mathbf{0.8775\pm0.0594}$      \\
NCI1 & $0.7847\pm0.0161$  & $0.6895\pm0.0182$ & $0.6343\pm0.0145$ & $0.7903\pm0.0168$ & $0.7888\pm0.0188
$&$\underline{0.8083\pm0.0113}$ & $\mathbf{0.8163\pm0.0134}$       \\
NCI109 & $0.7686\pm0.0263$ & $0.6991\pm0.0253$ & $0.6293\pm0.0245$ & $0.7737\pm0.0177$ & $0.7727\pm0.0285
$&$\underline{0.7991\pm0.0228}$ & $\mathbf{0.8013\pm0.0207}$     \\
PROTS$\_$full & $0.7493\pm0.0284$  & $0.7278\pm0.0306$ & $0.7296\pm0.0232$ & $\underline{0.7583\pm0.0337}$ &$0.7521\pm0.0294$ & $\mathbf{0.7619\pm0.0267}$	 & $0.7574\pm0.0279$		\\
R$-$BINARY & $0.8995\pm0.0241$ & $\underline{0.9115\pm0.0249}$ & $0.8295\pm0.0203$ & $0.9095\pm0.0268$ & $0.9080\pm0.0268$&$0.9110\pm0.0250$ & $\mathbf{0.9170\pm0.0215}$    \\
R$-$MULTI & $0.5333\pm0.0129$  & $0.5443\pm0.0181$ & $0.5031\pm0.0265$ & $0.5425\pm0.0149$ & $0.5425\pm0.0208$&$\underline{0.5535\pm0.0114}$ & $\mathbf{0.5535\pm0.0275}$    \\
ENZYMES & $0.4500\pm0.0679$ & $0.2783\pm0.0325$ &$0.2750\pm0.0651$ & $0.4317\pm0.0669$ & $\underline{0.5017\pm0.0626}$&$0.4617\pm0.0738$ & $\mathbf{0.5333\pm0.0641}$       \\
\hline
\textbf{Rank} & 4.7 & 4.9 & 6.6& 3.9& 3.8& $\underline{2.4}$& $\mathbf{1.5}$\\
\textbf{p-value} & 0.0015& 0.0034& 0.0005& 0.0010& 0.0005& 0.1289
& $-$\\
\hline\hline
\end{tabular}
}
\end{table}	
\end{landscape}

\begin{landscape}
\begin{table}[h]
\caption{Efficiency of the augmented node features in graph classification.}
\centering
\label{ablationnode}
\setlength{\tabcolsep}{0.3mm}
\scalebox{0.75}{
\begin{tabular}{l|ccccc|ccccc|c}
\hline\hline 
& \multicolumn{5}{c|}{\textbf{Separated features}} &  \multicolumn{5}{c|}{\textbf{Combined features}} & \textbf{Completed} \\
\hline
& \multicolumn{3}{c|}{\textbf{Structural}} &  \multicolumn{2}{c|}{\textbf{Quantized}} &  \multicolumn{3}{c|}{\textbf{Structural}} & \multicolumn{1}{c|}{\textbf{Quantized}} & \multicolumn{1}{c|}{\textbf{Structural}} &  \textbf{Completed} \\
\hline
\textbf{Dataset} & \textbf{w/o Dg}  & \textbf{w/o Tri}  & \textbf{w/o CK}  & \textbf{w/o TCo}  & \textbf{w/o SCo} &   \textbf{w/o TCK}  & \textbf{w/o DT}  & \textbf{w/o DCK}
&\textbf{w/o n$\_$quant} &\textbf{w/o n$\_$sub} &\textbf{CoS-GCN}\\
\hline
\textbf{BZR} &0.847$\pm$0.048 &0.832$\pm$0.043 &0.857$\pm$0.039 &0.835$\pm$0.052 &0.857$\pm$0.048 &0.842$\pm$0.039 &0.845$\pm$0.053 &0.857$\pm$0.042 &0.815$\pm$0.064&0.832$\pm$0.052& 0.832$\pm$0.036\\
\textbf{COX2} &0.818$\pm$0.049 &0.829$\pm$0.043 &0.814$\pm$0.043 &0.827$\pm$0.045 &0.805$\pm$0.072 &0.807$\pm$0.051 &0.827$\pm$0.062 &0.816$\pm$0.041& 0.803$\pm$0.054&0.822$\pm$0.056&0.824$\pm$0.055\\
\textbf{DD}& 0.774$\pm$0.027& 0.757$\pm$0.028& 0.750$\pm$0.021& 0.753$\pm$0.027& 0.756$\pm$0.032& 0.750$\pm$0.035& 0.743$\pm$0.036& 0.750$\pm$0.024& 0.777$\pm$0.030&0.770$\pm$0.030&0.778$\pm$0.032\\
\textbf{I-BINARY}& 0.749$\pm$0.040& 0.738$\pm$0.041& 0.726$\pm$0.042& 0.747$\pm$0.026& 0.734$\pm$0.036& 0.731$\pm$0.050& 0.732$\pm$0.041& 0.726$\pm$0.049& 0.750$\pm$0.053&0.726$\pm$0.034&0.754$\pm$0.052\\
\textbf{I-MULTI}& 0.473$\pm$0.039& 0.498$\pm$0.019& 0.469$\pm$0.031& 0.489$\pm$0.029& 0.486$\pm$0.034& 0.495$\pm$0.024& 0.492$\pm$0.022& 0.458$\pm$0.025& 0.493$\pm$0.033&0.503$\pm$0.024&0.503$\pm$0.019\\
\textbf{MUTAG}& 0.861$\pm$0.093& 0.841$\pm$0.070& 0.803$\pm$0.082& 0.851$\pm$0.066& 0.872$\pm$0.068& 0.808$\pm$0.077& 0.851$\pm$0.057& 0.781$\pm$0.103& 0.862$\pm$0.060&0.835$\pm$0.077& 0.878$\pm$0.059\\
\textbf{NCI1}& 0.780$\pm$0.019& 0.782$\pm$0.023& 0.782$\pm$0.021& 0.793$\pm$0.016& 0.791$\pm$0.017& 0.775$\pm$0.014& 0.783$\pm$0.022& 0.751$\pm$0.012& 0.816$\pm$0.017& 0.817$\pm$0.013& 0.816$\pm$0.013\\
\textbf{NCI109}& 0.777$\pm$0.025& 0.783$\pm$0.019& 0.767$\pm$0.024& 0.776$\pm$0.018& 0.780$\pm$0.023& 0.760$\pm$0.029& 0.772$\pm$0.025& 0.722$\pm$0.030& 0.796$\pm$0.024& 0.795$\pm$0.022& 0.801$\pm$0.021\\
\textbf{PROTS$\_$full}& 0.750$\pm$0.039& 0.749$\pm$0.034& 0.753$\pm$0.039& 0.758$\pm$0.038& 0.757$\pm$0.039& 0.758$\pm$0.042& 0.750$\pm$0.031& 0.748$\pm$0.036& 0.755$\pm$0.025&0.761$\pm$0.026& 0.757$\pm$0.028\\
\textbf{R-BINARY}& 0.915$\pm$0.013& 0.920$\pm$0.009& 0.889$\pm$0.025& 0.915$\pm$0.018& 0.905$\pm$0.018& 0.890$\pm$0.022& 0.918$\pm$0.017& 0.902$\pm$0.025& 0.906$\pm$0.018&0.895$\pm$0.033& 0.917$\pm$0.022\\
\textbf{R-MULTI}& 0.541$\pm$0.015& 0.554$\pm$0.014& 0.523$\pm$0.018& 0.558$\pm$0.021& 0.546$\pm$0.019& 0.528$\pm$0.021& 0.551$\pm$0.013& 0.549$\pm$0.022& 0.557$\pm$0.009&0.550$\pm$0.022& 0.554$\pm$0.028\\
\textbf{ENZYMES}& 0.490$\pm$0.048& 0.568$\pm$0.057& 0.557$\pm$0.074& 0.523$\pm$0.043& 0.545$\pm$0.066& 0.510$\pm$0.051& 0.528$\pm$0.050& 0.503$\pm$0.045& 0.520$\pm$0.059&0.535$\pm$0.072& 0.533$\pm$0.064\\
\hline
\textbf{Rank} & 6.3 & 4.6& 7.8& 4.8 & 5.4 & 7.9 & 5.9 & 8.8 & 5.0& 4.9 & 2.8 \\
\textbf{p-value}& 0.0093 &0.1387 &0.0093 &0.0400 &0.0986 &	0.0034 &0.0220 &	0.0049 &0.0049 &0.0488 &$-$\\
\hline\hline
\end{tabular}}
\end{table}					
\end{landscape}

\begin{landscape}
\begin{table}[h]
\caption{Efficiency of the augmented graph features in graph classification.}
\centering
\label{ablationgraph}
\setlength{\tabcolsep}{0.3mm}
\scalebox{0.75}{
\begin{tabular}{l|ccccc|ccccc|c}
\hline\hline 
& \multicolumn{5}{c|}{\textbf{Separated features}} &  \multicolumn{5}{c|}{\textbf{Combined features}} & \textbf{Completed} \\
\hline
& \multicolumn{3}{c|}{\textbf{Structural}} &  \multicolumn{2}{c|}{\textbf{Quantized}} &  \multicolumn{3}{c|}{\textbf{Structural}} & \multicolumn{1}{c|}{\textbf{Quantized}} & \multicolumn{1}{c|}{\textbf{Structural}} &  \textbf{Completed} \\
\hline
\textbf{Dataset} & \textbf{w/o Tri} & \textbf{w/o Cli}& \textbf{w/o Bri} & \textbf{w/o ClCo} & \textbf{w/o Effi} &\textbf{w/o TBri} & \textbf{w/o ClBri} & \textbf{w/o TrCl} & \textbf{w/o g$\_$quant} & \textbf{w/o g$\_$sub} & \textbf{CoS-GCN}\\
\hline
\textbf{BZR} &0.850$\pm$0.056 &0.852$\pm$0.032 &0.857$\pm$0.043 &0.840$\pm$0.053 &0.840$\pm$0.050 &0.845$\pm$0.037 &0.847$\pm$0.056 &0.845$\pm$0.068&0.785$\pm$0.023&0.830$\pm$0.051 & 0.832$\pm$0.036\\
\textbf{COX2} & 0.812$\pm$0.056& 0.820$\pm$0.052& 0.827$\pm$0.070& 0.812$\pm$0.061& 0.824$\pm$0.048& 0.822$\pm$0.050& 0.799$\pm$0.068& 0.807$\pm$0.055& 0.784$\pm$0.011&0.798$\pm$0.057&0.824$\pm$0.055\\
\textbf{DD}&0.759$\pm$0.031&0.769$\pm$0.026&0.764$\pm$0.029&0.766$\pm$0.030&0.764$\pm$0.020&0.768$\pm$0.034&0.773$\pm$0.034&0.761$\pm$0.029&	0.745$\pm$0.044&0.784$\pm$0.040&0.778$\pm$0.032\\
\textbf{I-BINARY}& 0.744$\pm$0.040& 0.730$\pm$0.035& 0.738$\pm$0.035& 0.733$\pm$0.026& 0.744$\pm$0.042& 0.736$\pm$0.036& 0.731$\pm$0.034& 0.723$\pm$0.036 &0.567$\pm$0.059 & 0.736$\pm$0.051&0.754$\pm$0.052\\
\textbf{I-MULTI} &0.487$\pm$0.028& 0.480$\pm$0.024& 0.475$\pm$0.035& 0.475$\pm$0.024& 0.471$\pm$0.024& 0.477$\pm$0.025& 0.475$\pm$0.028& 0.499$\pm$0.025&0.366$\pm$0.033&0.501$\pm$0.026&0.503$\pm$0.019\\
\textbf{MUTAG} & 0.830$\pm$0.065 & 0.856$\pm$0.067 & 0.856$\pm$0.082 & 0.867$\pm$0.064 & 0.878$\pm$0.058 & 0.825$\pm$0.062 & 0.856$\pm$0.078 & 0.846$\pm$0.060& 0.856$\pm$0.076&0.861$\pm$0.083&0.878$\pm$0.059\\
\textbf{NCI1}  & 0.796$\pm$0.022 & 0.794$\pm$0.013 & 0.795$\pm$0.025 & 0.753$\pm$0.081 & 0.798$\pm$0.017 & 0.790$\pm$0.030 & 0.787$\pm$0.015 & 0.787$\pm$0.027& 0.784$\pm$0.018&0.810$\pm$0.016&0.816$\pm$0.013\\
\textbf{NCI109} & 0.783$\pm$0.018& 0.779$\pm$0.016& 0.774$\pm$0.025& 0.772$\pm$0.025& 0.776$\pm$0.028& 0.782$\pm$0.015& 0.781$\pm$0.017& 0.782$\pm$0.024&  0.772$\pm$0.024&0.799$\pm$0.016&0.801$\pm$0.021\\
\textbf{PROTS$\_$full}& 0.766$\pm$0.046& 0.741$\pm$0.044& 0.750$\pm$0.039& 0.757$\pm$0.042& 0.769$\pm$0.041& 0.762$\pm$0.036& 0.754$\pm$0.044&0.745$\pm$0.045&0.742$\pm$0.020&0.748$\pm$0.029&0.757$\pm$0.028\\
\textbf{R-BINARY}&  0.915$\pm$0.014& 0.915$\pm$0.017& 0.914$\pm$0.017& 0.917$\pm$0.021& 0.920$\pm$0.015& 0.909$\pm$0.029& 0.917$\pm$0.022& 0.917$\pm$0.022& 0.757$\pm$0.045& 0.911$\pm$0.024& 0.917$\pm$0.022\\
\textbf{R-MULTI}& 0.551$\pm$0.021& 0.563$\pm$0.024& 0.547$\pm$0.017& 0.543$\pm$0.024& 0.551$\pm$0.019& 0.540$\pm$0.026& 0.560$\pm$0.015& 0.556$\pm$0.019&  0.250$\pm$0.053&0.552$\pm$0.015&0.554$\pm$0.028\\
\textbf{ENZYMES}& 0.520$\pm$0.049&0.547$\pm$0.067&0.537$\pm$0.044&0.523$\pm$0.085&0.535$\pm$0.041&0.552$\pm$0.074&0.535$\pm$0.068&0.563$\pm$0.077& 0.173$\pm$0.039&0.525$\pm$0.040&0.533$\pm$0.064\\
\hline
\textbf{Rank}&5.5&5.3&5.5&6.8&4.4&6.0&5.4&5.9&10.3&5.5&2.9\\
\textbf{p-value} & 0.0278&0.0669&0.0542&0.0039&0.168&0.0679	&0.0420&	0.0977&0.0005&0.0068 &$-$\\
\hline\hline
\end{tabular}}
\end{table}						
\end{landscape}

\subsubsection{Ablation Study of the Augmented Features}
In this section, we evaluate the effect of each augmented features on the final performance of CoS-GNN. We divided the augmented features into two categories, \ie, one is the characteristics of some specific substructures, and another is some quantized values to measure structural properties of the node/graph. Then we remove each feature and their combinations in each category separately and compare the classification results with our CoS-GNN. The removal of node and graph features are implemented separately. The GNN backbone we use here is CoS-GCN. 

Firstly, we delete degree, triangle, clique and k-core (denoted by w/o Dg, w/o Tri, w/o CK respectively) and then remove their combinations, \ie, degree and triangle; degree, clique and k-core; triangle, clique and k-core; all the characteristics (shortened to w/o DT, w/o DCK, w/o TCK, w/o n$\_$sub). We also remove quantized values -- triangle clustering coefficient, square clustering coefficient and their combination (written as w/o TCo, w/o SCo and w/o n$\_$quant respectively). The results are reported in Table~\ref{ablationnode}. It is obvious that the removal of augmented features might cause better performance on some specific datasets but will results in decline on many other datasets, leading to a clear decline in the overall performance. Although our CoS-GCN still ranks first on the overall performance, yhe paired signed-rank test indicates that the performance drop of models with part of augmented features across 12 datasets is significant at $85\%$ to $99\%$ confidence level. Deletion of degree and clique and k-core characteristics respectively and their combinations often lead to worse performance, indicating their effect in the full CoS-GCN. Omitting all the node structural characteristics performs better than removing part of them on some datasets and this might be because that the remaining structural characteristics increase the similarity among data.

Later, we delete augmented graph features sequentially (\ie, w/o Tri, w/o Cli, w/o Bri, w/o ClCo and w/o Effi stand for removing triangle, clique, bridge numbers, average clustering coefficient and average local and global efficiency, respectively; w/o TBri, w/o ClBri and w/o TrCl denote deleting triangle and bridge numbers, clique and bridge numbers and triangle and clique numbers; w/o g$\_$quant and w/o g$\_$sub means removing the quantized values and graph substructural statistics, respectively). The results are shown in Table~\ref{ablationgraph}. The improvement of our CoS-GCN over the competing methods across the datasets is significant at $80\%$ to $99\%$ confidence level. The removal of triangle, clique, bridge and their combination knowledge results in similar overall performance, which might be because that each feature contributes to the performance of CoS-GNN differently in different dataset. The deletion of average clustering coefficient or all quantized values has larger effect on the final performance, which indicates that the average clustering coefficient information is more discriminative. In summary, the graph-level substructural characteristics are also beneficial in our CoS-GNN since the removal of them leads to a clear decline of the overall performance of CoS-GNN. 


\section{Conclusion}
In this work, we propose a collective structure knowledge-augmented graph neural network (CoS-GNN) to enhance the expressive power of conventional message passing neural networks. The augmented node and graph features carry important and generalizable structural knowledge, which is tapped by our proposed message passing mechanism to integrate the original and augmented graph knowledge, resulting in graph representations with significantly improved expressiveness. This is justified by extensive experiments 
in various down-stream tasks, including graph classification, anomaly detection, and OOD generalization.

\section{Acknowledgments}
In this work R. Ma and L. Chen are supported by ARC DP210101347.


\begin{thebibliography}{00}
\bibitem[Borgwardt et al.(2005)]{borgwardt2005protein}
Borgwardt, K. M., Ong, C. S., Schönauer, S., Vishwanathan, S. V. N., Smola, A. J., $\&$ Kriegel, H. P. (2005). Protein function prediction via graph kernels. Bioinformatics, 21(suppl$\_$1), i47-i56. \url{https://doi.org/10.1093/bioinformatics/bti1007}. 

\bibitem[Borgwardt and Kriegel(2005)]{borgwardt2005shortest}
Borgwardt, K. M., $\&$ Kriegel, H. P. (2005, November). Shortest-path kernels on graphs. In Fifth IEEE international conference on data mining (ICDM'05) (pp. 8-pp). IEEE. \url{https://doi.org/10.1109/icdm.2005.132}.

\bibitem[Zhao et al.(2022)]{zhao2022deep}
Zhao, X., Wu, J., Peng, H., Beheshti, A., Monaghan, J. J., McAlpine, D., ... $\&$ He, L. (2022). Deep reinforcement learning guided graph neural networks for brain network analysis. Neural Networks, 154, 56-67. \url{https://doi.org/10.1016/j.neunet.2022.06.035}


\bibitem[Song et al.(2022)]{song2022graph}
Song, Z., Yang, X., Xu, Z., $\&$ King, I. (2022). Graph-based semi-supervised learning: A comprehensive review. IEEE Transactions on Neural Networks and Learning Systems. \url{https://doi.org/10.1109/TNNLS.2022.3155478}.

\bibitem[wu et al. (2023)]{wu2023survey}
Wu, Y., Chen, Y., Yin, Z., Ding, W., $\&$ King, I. (2023). A survey on graph embedding techniques for biomedical data: Methods and applications. Information Fusion, 100, 101909. \url{https://doi.org/10.1016/j.inffus.2023.101909}.

\bibitem[Hassani(2022)]{hassani2022cross}
Hassani, K. (2022, June). Cross-domain few-shot graph classification. In Proceedings of the AAAI Conference on Artificial Intelligence (Vol. 36, No. 6, pp. 6856-6864). \url{https://doi.org/10.1609/aaai.v36i6.20642}.

\bibitem[Li et al.(2022)]{li2022geomgcl}
Li, S., Zhou, J., Xu, T., Dou, D., $\&$ Xiong, H. (2022, June). Geomgcl: Geometric graph contrastive learning for molecular property prediction. In Proceedings of the AAAI conference on artificial intelligence (Vol. 36, No. 4, pp. 4541-4549). \url{https://doi.org/10.1609/aaai.v36i4.20377}.

\bibitem[Zhang et al.(2022)]{zhang2022graph}
Zhang, Z., Chen, L., Zhong, F., Wang, D., Jiang, J., Zhang, S., ... $\&$ Li, X. (2022). Graph neural network approaches for drug-target interactions. Current Opinion in Structural Biology, 73, 102327. \url{https://doi.org/10.1016/j.sbi.2021.102327}.

\bibitem[Zhang et al.(2022)b]{zhang2022improving}
Zhang, Y., Gao, S., Pei, J., $\&$ Huang, H. (2022, August). Improving social network embedding via new second-order continuous graph neural networks. In Proceedings of the 28th ACM SIGKDD conference on knowledge discovery and data mining (pp. 2515-2523). \url{https://doi.org/10.1145/3534678.3539415}.

\bibitem[Stanković et al.(2020)]{Stankovic2020data}
Stanković, L., Mandic, D., Daković, M., Brajović, M., Scalzo, B., Li, S., $\&$ Constantinides, A. G. (2020). Data analytics on graphs Part I: Graphs and spectra on graphs. Foundations and Trends® in Machine Learning, 13(1), 1-157. \url{http://dx.doi.org/10.1561/2200000078-1}.

\bibitem[Yanardag and Vishwanathan(2015)]{yanardag2015deep}
Yanardag, P., $\&$ Vishwanathan, S. V. N. (2015, August). Deep graph kernels. In Proceedings of the 21th ACM SIGKDD international conference on knowledge discovery and data mining (pp. 1365-1374). \url{https://doi.org/10.1145/2783258.2783417}.

\bibitem[Shen et al.(2023)]{shen2023uniskgrep}
Shen, Y., Jiang, X., Li, Z., Wang, Y., Xu, C., Shen, H., $\&$ Cheng, X. (2023). UniSKGRep: A unified representation learning framework of social network and knowledge graph. Neural Networks, 158, 142-153. \url{https://doi.org/10.1016/j.neunet.2022.11.010}.

\bibitem[He, Ji and Huang(2022)]{he2022illuminati}
He, H., Ji, Y., $\&$ Huang, H. H. (2022, June). Illuminati: Towards explaining graph neural networks for cybersecurity analysis. In 2022 IEEE 7th European Symposium on Security and Privacy (EuroS$\&$P) (pp. 74-89). IEEE. \url{https://doi.org/10.1109/eurosp53844.2022.00013}.

\bibitem[Wu et al.(2020)]{wu2020comprehensive}
Wu, Z., Pan, S., Chen, F., Long, G., Zhang, C., $\&$ Philip, S. Y. (2020). A comprehensive survey on graph neural networks. IEEE transactions on neural networks and learning systems, 32(1), 4-24. \url{https://doi.org/10.1109/tnnls.2020.2978386}.

\bibitem[Kipf and Welling(2016)]{kipfsemi}
Kipf, T. N., $\&$ Welling, M. (2016, November). Semi-Supervised Classification with Graph Convolutional Networks. In International Conference on Learning Representations. 

\bibitem[Hamilton, Ying and Leskovec(2017)]{hamilton2017inductive}
Hamilton, W., Ying, Z., $\&$ Leskovec, J. (2017). Inductive representation learning on large graphs. Advances in neural information processing systems, 30.

\bibitem[Veličković et al.(2018)]{velivckovicgraph}
Veličković, P., Cucurull, G., Casanova, A., Romero, A., Liò, P., $\&$ Bengio, Y. (2018, February). Graph Attention Networks. In International Conference on Learning Representations.

\bibitem[Xu et al.(2018)]{xupowerful}
Xu, K., Hu, W., Leskovec, J., $\&$ Jegelka, S. (2018, September). How Powerful are Graph Neural Networks?. In International Conference on Learning Representations.

\bibitem[Morriset al.(2019)]{morris2019weisfeiler}
Morris, C., Ritzert, M., Fey, M., Hamilton, W. L., Lenssen, J. E., Rattan, G., $\&$ Grohe, M. (2019, July). Weisfeiler and leman go neural: Higher-order graph neural networks. In Proceedings of the AAAI conference on artificial intelligence (Vol. 33, No. 01, pp. 4602-4609). \url{https://doi.org/10.1609/aaai.v33i01.33014602}.

\bibitem[Ding et al.(2022)]{ding2022data}
Ding, K., Xu, Z., Tong, H., $\&$ Liu, H. (2022). Data augmentation for deep graph learning: A survey. ACM SIGKDD Explorations Newsletter, 24(2), 61-77. \url{https://doi.org/10.1145/3575637.3575646}.

\bibitem[Li et al.(2020)]{li2020distance}
Li, P., Wang, Y., Wang, H., $\&$ Leskovec, J. (2020). Distance encoding: Design provably more powerful neural networks for graph representation learning. Advances in Neural Information Processing Systems, 33, 4465-4478.

\bibitem[Bouritsas et al.(2022)]{bouritsas2022improving}
Bouritsas, G., Frasca, F., Zafeiriou, S., $\&$ Bronstein, M. M. (2022). Improving graph neural network expressivity via subgraph isomorphism counting. IEEE Transactions on Pattern Analysis and Machine Intelligence, 45(1), 657-668. \url{https://doi.org/10.1109/tpami.2022.3154319}.

\bibitem[You et al.(2021)]{you2021identity}
You, J., Gomes-Selman, J. M., Ying, R., $\&$ Leskovec, J. (2021, May). Identity-aware graph neural networks. In Proceedings of the AAAI conference on artificial intelligence (Vol. 35, No. 12, pp. 10737-10745). \url{https://doi.org/10.1609/aaai.v35i12.17283}.

\bibitem[Liu et al.(2022)]{liu2022local}
Liu, S., Ying, R., Dong, H., Li, L., Xu, T., Rong, Y., ... $\&$ Wu, D. (2022, June). Local augmentation for graph neural networks. In International Conference on Machine Learning (pp. 14054-14072). PMLR.

\bibitem[Zhang and Li(2021)]{zhang2021nested}
Zhang, M., $\&$ Li, P. (2021). Nested graph neural networks. Advances in Neural Information Processing Systems, 34, 15734-15747.

\bibitem[Shorten and Khoshgoftaar(2019)]{shorten2019survey}
Shorten, C., $\&$ Khoshgoftaar, T. M. (2019). A survey on image data augmentation for deep learning. Journal of big data, 6(1), 1-48. \url{https://doi.org/10.1186/s40537-019-0197-0}.

\bibitem[Bayer, Kaufhold and Reuter(2022)]{bayer2022survey}
Bayer, M., Kaufhold, M. A., $\&$ Reuter, C. (2022). A survey on data augmentation for text classification. ACM Computing Surveys, 55(7), 1-39. \url{https://doi.org/10.1145/3544558}.

\bibitem[Sato, Yamada and Kashima(2021)]{sato2021random}
Sato, R., Yamada, M., $\&$ Kashima, H. (2021). Random features strengthen graph neural networks. In Proceedings of the 2021 SIAM international conference on data mining (SDM) (pp. 333-341). Society for Industrial and Applied Mathematics. \url{https://doi.org/10.1137/1.9781611976700.38}.

\bibitem[Liu et al.(2022)b]{liu2022boosting}
Liu, X., Cheng, J., Song, Y., $\&$ Jiang, X. (2022, June). Boosting graph structure learning with dummy nodes. In International Conference on Machine Learning (pp. 13704-13716). PMLR.

\bibitem[Han et al.(2022)]{han2022g}
Han, X., Jiang, Z., Liu, N., $\&$ Hu, X. (2022, June). G-mixup: Graph data augmentation for graph classification. In International Conference on Machine Learning (pp. 8230-8248). PMLR.

\bibitem[Park, Shim and Yang(2022)]{park2022graph}
Park, J., Shim, H., $\&$ Yang, E. (2022, June). Graph transplant: Node saliency-guided graph mixup with local structure preservation. In Proceedings of the AAAI Conference on Artificial Intelligence (Vol. 36, No. 7, pp. 7966-7974). \url{https://doi.org/10.1609/aaai.v36i7.20767}.

\bibitem[Papp et al.(2021)]{papp2021dropgnn}
Papp, P. A., Martinkus, K., Faber, L., $\&$ Wattenhofer, R. (2021). DropGNN: Random dropouts increase the expressiveness of graph neural networks. Advances in Neural Information Processing Systems, 34, 21997-22009.

\bibitem[Maron et al.(2019)]{maron2019provably}
Maron, H., Ben-Hamu, H., Serviansky, H., $\&$ Lipman, Y. (2019). Provably powerful graph networks. Advances in Neural Information Processing Systems, 32.

\bibitem[Grohe(2017)]{grohe2017descriptive}
Grohe, M. (2017). Descriptive complexity, canonisation, and definable graph structure theory (Vol. 47). Cambridge University Press. \url{https://doi.org/10.1017/9781139028868}.

\bibitem[Chen et al.(2020)]{chen2020can}
Chen, Z., Chen, L., Villar, S., $\&$ Bruna, J. (2020). Can graph neural networks count substructures?. Advances in Neural Information Processing Systems, 33, 10383-10395.

\bibitem[Arvind et al.(2020)]{arvind2020weisfeiler}
Arvind, V., Fuhlbrück, F., Köbler, J., $\&$ Verbitsky, O. (2020). On weisfeiler-leman invariance: Subgraph counts and related graph properties. Journal of Computer and System Sciences, 113, 42-59. \url{https://doi.org/10.1016/j.jcss.2020.04.003}.

\bibitem[Ju et al.(2022)]{ju2022kgnn}
Ju, W., Yang, J., Qu, M., Song, W., Shen, J., $\&$ Zhang, M. (2022, February). Kgnn: Harnessing kernel-based networks for semi-supervised graph classification. In Proceedings of the fifteenth ACM international conference on web search and data mining (pp. 421-429). \url{https://doi.org/10.1145/3488560.3498429}.

\bibitem[Luo et al.(2023b)]{luo2023towards}  
Luo, X., Zhao, Y., Qin, Y., Ju, W., $\&$ Zhang, M. (2023). Towards semi-supervised universal graph classification. IEEE Transactions on Knowledge and Data Engineering.

\bibitem[Ju et al.(2022)]{ju2022ghnn}
Ju, W., Luo, X., Ma, Z., Yang, J., Deng, M., $\&$ Zhang, M. (2022). Ghnn: Graph harmonic neural networks for semi-supervised graph-level classification. Neural Networks, 151, 70-79. \url{https://doi.org/10.1016/j.neunet.2022.03.018}.

\bibitem[Ju et al.(2023)]{ju2023unsupervised}
Ju, W., Gu, Y., Luo, X., Wang, Y., Yuan, H., Zhong, H., $\&$ Zhang, M. (2023). Unsupervised graph-level representation learning with hierarchical contrasts. Neural Networks, 158, 359-368.
\url{https://doi.org/10.1016/j.neunet.2022.11.019}.

\bibitem[Luo et al.(2020)]{luo2022clear}
Luo, X., Ju, W., Qu, M., Gu, Y., Chen, C., Deng, M., ... $\&$ Zhang, M. (2022). Clear: Cluster-enhanced contrast for self-supervised graph representation learning. IEEE Transactions on Neural Networks and Learning Systems. \url{https://doi.org/10.1109/TNNLS.2022.3177775}.

\bibitem[Luo et al.(2023)]{luo2023self}
Luo, X., Ju, W., Gu, Y., Mao, Z., Liu, L., Yuan, Y., $\&$ Zhang, M. (2023). Self-supervised graph-level representation learning with adversarial contrastive learning. ACM Transactions on Knowledge Discovery from Data, 18(2), 1-23. \url{https://doi.org/10.1145/3624018}

\bibitem[Chikwendu et al.(2023)]{chikwendu2023comprehensive}
Chikwendu, I. A., Zhang, X., Agyemang, I. O., Adjei-Mensah, I., Chima, U. C., $\&$ Ejiyi, C. J. (2023). A comprehensive survey on deep graph representation learning methods. Journal of Artificial Intelligence Research, 78, 287-356.

\bibitem[Ju2 et al.(2023)]{ju2023tgnn}
Ju, W., Luo, X., Qu, M., Wang, Y., Chen, C., Deng, M., ... $\&$ Zhang, M. (2023). TGNN: A joint semi-supervised framework for graph-level classification. arXiv preprint arXiv:2304.11688.


\bibitem[Morris et al.(2020)]{Morris+2020}
Morris, C., Kriege, N. M., Bause, F., Kersting, K., Mutzel,
P., $\&$ Neumann, M. (2020). TUDataset: A collection of
benchmark datasets for learning with graphs. ICML 2020
Workshop on Graph Representation Learning and Beyond
(GRL+ 2020).

\bibitem[Shervashidze et al.(2011)]{shervashidze2011weisfeiler}
Shervashidze, N., Schweitzer, P., Van Leeuwen, E. J., Mehlhorn, K., $\&$ Borgwardt, K. M. (2011). Weisfeiler-lehman graph kernels. Journal of Machine Learning Research, 12(9).

\bibitem[Neumann et al.(2016)]{neumann2016propagation}
Neumann, M., Garnett, R., Bauckhage, C., $\&$ Kersting, K. (2016). Propagation kernels: efficient graph kernels from propagated information. Machine learning, 102, 209-245. \url{https://doi.org/10.1007/s10994-015-5517-9}.

\bibitem[Ying et al.(2018)]{ying2018hierarchical}
Ying, Z., You, J., Morris, C., Ren, X., Hamilton, W., $\&$ Leskovec, J. (2018). Hierarchical graph representation learning with differentiable pooling. Advances in Neural Information Processing Systems, 31.


\bibitem[Zhang et al.(2021)]{zhang2021hierarchical}
Zhang, Z., Bu, J., Ester, M., Zhang, J., Li, Z., Yao, C., ... $\&$ Wang, C. (2021). Hierarchical multi-view graph pooling with structure learning. IEEE Transactions on Knowledge and Data Engineering, 35(1), 545-559. \url{https://doi.org/10.1109/tkde.2021.3090664}.

\bibitem[Ma et al.(2022)]{ma2022deep}
Ma, R., Pang, G., Chen, L., $\&$ van den Hengel, A. (2022, February). Deep graph-level anomaly detection by glocal knowledge distillation. In Proceedings of the Fifteenth ACM International Conference on Web Search and Data Mining (pp. 704-714). \url{https://doi.org/10.1145/3488560.3498473}.

\bibitem[Gui et al.(2022)]{gui2022good}
Gui, S., Li, X., Wang, L., $\&$ Ji, S. (2022). Good: A graph out-of-distribution benchmark. Advances in Neural Information Processing Systems, 35, 2059-2073.


\end{thebibliography}
\end{document}